\newcommand{\PreserveBackslash}[1]{\let\temp=\\#1\let\\=\temp}
\newcolumntype{C}[1]{>{\PreserveBackslash\centering}p{#1}}
\newcolumntype{R}[1]{>{\PreserveBackslash\raggedleft}p{#1}}
\newcolumntype{L}[1]{>{\PreserveBackslash\raggedright}p{#1}}
\newtheorem{lemma}{Lemma}
\newtheorem{thm}{Theorem}
\def \a {\mathbf{a}}
\def \u {\mathbf{u}}
\def \v {\mathbf{v}}
\def \w  {\mathbf{w}}
\def \x {\mathbf{x}}
\def \bqsa {\begin{eqnarray}}
\def \eqsa {\end{eqnarray}}
\def \bqs {\begin{equation}\begin{aligned}}
\def \eqs {\end{aligned}\end{equation}}
\begin{document}
\title{Robust Cost-Sensitive Learning for Recommendation with Implicit Feedback}


\author{\IEEEauthorblockN{Peng Yang\IEEEauthorrefmark{1},
Peilin Zhao\IEEEauthorrefmark{1},
Xin Gao\IEEEauthorrefmark{1} and Yong Liu\IEEEauthorrefmark{1}}
}

\maketitle

\begin{abstract}

Recommendation is the task of improving customer experience through personalized recommendation based on users' past feedback.
In this paper, we investigate the most common scenario: the user-item (U-I) matrix of implicit feedback (e.g. clicks, views, purchases). Even though many recommendation approaches are designed based on implicit feedback, they attempt to project the U-I matrix into a low-rank latent space, which is a strict restriction that rarely holds in practice. In addition, although misclassification costs from imbalanced classes are significantly different, few methods take the cost of classification error into account. To address aforementioned issues, we propose a robust framework by decomposing the U-I matrix into two components: (1) a low-rank matrix that captures the common preference among multiple similar users, and (2) a sparse matrix that detects the user-specific preference of individuals. To minimize the asymmetric cost of error from different classes, a cost-sensitive learning model is embedded into the framework. Specifically, this model exploits different costs in the loss function for the observed and unobserved instances. We show that the resulting non-smooth convex objective can be optimized efficiently by an accelerated projected gradient method with closed-form solutions. Morever, the proposed algorithm can be scaled up to large-sized datasets after a relaxation. The theoretical result shows that even with a small fraction of 1's in the U-I matrix $M\in\mathbb{R}^{n\times m}$, the cost-sensitive error of the proposed model is upper bounded by $O(\frac{\alpha}{\sqrt{mn}})$, where $\alpha$ is a bias over imbalanced classes. Finally, empirical experiments are extensively carried out to evaluate the effectiveness of our proposed algorithm. Encouraging experimental results show that our algorithm outperforms several state-of-the-art algorithms on benchmark recommendation datasets.

\end{abstract}

\vspace{-0.001in}
\section{Introduction}
\vspace{-0.001in}

Recommender systems are attractive for content providers, through which they can increase sales or views. For example, online shopping websites like Amazon and eBay provide each customer personalized recommendation of products;
video portals like YouTube recommend latest popular movies to the audience. The user-item (U-I) matrix, the dominant framework for recommender systems, encodes the individual preference of users for items in a matrix. Recent work focuses on scenarios where users provide explicit feedback, e.g. ratings in different scores (such as a 1-5 scale in Netflix). However, in many real-world scenarios, most often the feedback is not explicitly but implicitly expressed with binary (0 or 1) values, indicating whether a user clicked, viewed or purchased an item. Such implicit feedback is already available in a number of information systems, such as web page logs.

In the U-I matrix of implicit feedback, the observed feedbacks corresponding to 1's represent positive instances, while the other entries annotated as 0's are unlabeled ones (a mixture of negative and missing values). Existing methods solve this problem with a binary classification, among which the most widely used technique is collaborative filtering (CF)~\cite{hu2008collaborative}, including Bayesian belief nets CF models~\cite{miyahara2000collaborative}, clustering CF models~\cite{ungar1998clustering}, regularized matrix factorization~\cite{hu2008collaborative}, etc. However, all these algorithms have two major limitations.
First, very few positives are observed, which limits the application of some classical supervised classification approaches. Besides, it is often a highly class-imbalanced learning problem as the number of observed positives is significantly smaller than that of hidden negatives, which brings a challenge to existing schemas using regular classification techniques. Second, traditional methods basically constrain the U-I matrix into a low-rank structure, which may be too restrictive and rarely holds in practice, as users with different specific preference often exist. Outliers or noisy patterns in U-I matrix may corrupt the low-rank latent space~\cite{hsu2011robust}, thus reasonable basis cannot be learned in general~\cite{linden2003amazon}.

To overcome these two limitations, in recent years, algorithms have been proposed to tackle each limitation separately.
For the first limitation (\emph{imbalanced classes}), cost-sensitive learning~\cite{elkan2001foundations} and positive-unlabeled matrix completion algorithms~\cite{hsieh2015pu} have been proposed to exploit an asymmetric cost of error from positive and unlabeled samples. For the second limitation (\emph{outlier estimation}), robust matrix decomposition algorithm~\cite{hsu2011robust} has been developed to detect the outliers and learn basis from the recovered subspace, and it showed success in different applications, including system identification~\cite{chandrasekaran2011rank}, multi-task learning~\cite{yang2017robust,yang2016learning}, PCA~\cite{candes2011robust} and graphical modeling~\cite{chandrasekaran2010latent}. Although each aforementioned algorithm, e.g. cost-sensitive learning or robust matrix decomposition, overcomes one problem, it will suffer from the other one. Thus, there is a need to devise a new and unified algorithm that can overcome both limitations simultaneously.

In this work, we intend to simultaneously perform outlier estimation and cost-sensitive learning to uncover the missing positive entries. The basis of our idea is that the U-I matrix $M$ can be decomposed into two components: the first one recognizes user-specific preference $V$, while the common preference can be better captured by the second component $M-V$. An entry-wised 1-norm regularization is applied on the first component to track the outliers of personalized traits, while a trace-norm regularization is imposed on the second one to capture a low-rank latent structure. To minimize the asymmetric cost of error from different classes, a cost-sensitive learning model is embedded into the recommendation framework. In particular, the model imposes different costs in the loss function for the observed and unobserved samples~\cite{natarajan2013learning}. The resulting non-smooth convex objective can be optimized efficiently by an accelerated projected gradient method, which can derive closed-form solutions for both components. In addition, the proposed framework can be scaled up to large-sized datasets after a relaxation. We provide a theoretical evaluation based on cost-sensitive measures by giving a proof that our algorithm yields strong cost bounds: given a U-I matrix $M\in\mathbb{R}^{n\times m}$, the expected cost of the proposed model is upper bounded by $O(\alpha/\sqrt{mn})$, where $\alpha$ is a bias over imbalanced classes. This implies that even if a small fraction of 1's are observed, we can still estimate the U-I matrix $M$ accurately when $\sqrt{mn}$ is large enough. Extensive experiments are conducted to evaluate its performance on the real-world applications. The promising empirical results show that the proposed algorithm outperforms the existing state-of-the-art baselines on a number of recommender system datasets.

\vspace{-0.001in}
\section{Related Work}
\vspace{-0.001in}

The proposed work in this paper is mainly related to two directions of research in the data mining field: (i) the approaches used in recommender systems, (ii) cost-sensitive classification over class-imbalanced matrices.

\subsection{Recommender Systems}

Generally speaking, recommender systems can be categorized by two different strategies:
\emph{content-based approach} and \emph{collaborative filtering}:
(1) Existing content-based approaches make predictions by analyzing the content of textual information, such as demography~\cite{krulwich1997lifestyle}, and finding information in the content~\cite{melville2002content}.
However, content-based recommender systems rely on external information that is usually unavailable, and they generally increase the labeling cost and implementation complexity~\cite{popescul2001probabilistic}.
(2) Collaborative filtering (CF), which we focus on in this paper, relies only on past user behavior without requiring the creation of explicit profiles. There are two main categories of CF techniques: memory-based CF and model-based CF. Memory-based CF uses the U-I rating matrix to calculate the similarity between users or items and recommend those with top-ranked similarity values, e.g. neighbor-based CF, item-based or user-based top-N recommendations. However, the prediction of memory-based CF is unreliable when data are sparse and the common items are very few.
To overcome this issue, model-based CF has been extensively investigated. Model-based CF techniques exploit the pure rating data to learn a predictive model~\cite{breese1998empirical}. Well-known model-based CF techniques include Bayesian belief nets (BNs)~\cite{miyahara2000collaborative}, clustering CF models~\cite{ungar1998clustering}, etc.
Among the various CF techniques, matrix factorization (MF)~\cite{koren2009matrix} is the most popular one. It projects the U-I matrix into a shared latent space, and represents each user or item with a vector of latent features.
Recently, CF models have been adapted into implicit feedback settings: \cite{hu2008collaborative} exploited a regularized MF technique (WRMF) on the weighted 0-1 matrix; \cite{rendle2009bpr} proposed a personalized ranking algorithm (BPRMF) that integrated Bayesian formulation with MF and optimized the model based on stochastic gradient descent.
However, the aforementioned methods modeled user preference by a small number of latent factors, which rarely holds in real-world U-I matrices, as users with specific preference often exist.

In practical scenarios, without an effective strategy to reduce the negative effect from outliers and noise, the low-rank latent space is likely corrupted and prevented from reliable solutions. Hence, besides properly modeling the low-rank structure, how to handle the gross outliers is crucial to the performance. To solve this problem, we propose a robust framework that simultaneously pursues a low-rank structure and outlier estimation via convex optimization. Although this idea has been used in some problems~\cite{candes2011robust,hsu2011robust,chandrasekaran2010latent},  it has not been applied to recommender systems.

Our work is closely related to robust PCA and matrix completion problems, both of which try to recover a low-rank matrix from imperfect observations. A major difference between the two problems is that in robust PCA the outliers or corrupted observations commonly exist but assumed to be sparse~\cite{wright2009robust}, whereas in matrix completion the support of missing entries is given~\cite{hsieh2015pu}. In contrast to~\cite{wright2009robust,hsieh2015pu}, their problem scenarios will be both considered in our algorithm for recommendation.

\subsection{Cost-Sensitive Learning}

Cost-sensitive classification has been extensively studied in many practical scenarios~\cite{liu2006influence}, such as medical diagnosis and fraud detection. For example, rejecting a valid credit card transaction may cause some inconvenience while approving a large fraudulent transaction may have very severe consequences. Similarly, recommending an uninteresting item may be tolerated by users while missing an interesting target to users would lead to business loss. Cost-sensitive learning tends to be optimized under such asymmetric costs.
Various cost-sensitive metrics are proposed recently, among which the weighted misclassification cost~\cite{elkan2001foundations}, and the weighted sum of recall and specificity~\cite{han2011data} are considered to measure the classification performance. Even though a number of cost-sensitive learning algorithms were proposed in literature~\cite{tan1993cost,lozano2008multi}, few of them were used in collaborative filtering techniques, among which the most relevant technique is the weighted MF model, that assigns an apriori weight to some selected samples in the matrix. However, such weights are used on unobserved samples to either reduce the impact from hidden negative instances~\cite{pan2008one} or assign a confidence to potential positive ones~\cite{hu2008collaborative,rendle2009bpr,sindhwani2010one}, which are basically cost insensitive and lack theoretical guarantee.


Despite extensive works in both fields, to the best of our knowledge, this is a new algorithm that combines cost-sensitive learning and robust matrix decomposition for recommender systems. In specific, our algorithm is able to simultaneously estimate outliers and uncover missing positives from the highly class-imbalanced matrices.

\vspace{-0.001in}
\section{Problem Setting}
\vspace{-0.001in}

We assume that a user-item rating matrix $M\in\mathbb{R}^{n\times m}$, $M_{ij}\in[0,1]$ for all $(i,j)$, has a bounded nuclear norm, i.e. $\|M\|_* \leq \epsilon$ ($\epsilon > 0$), where $|n|$ and $|m|$ are numbers of items and users in rating matrix, and $\epsilon$ is a constant independent of $n$ and $m$. To simulate the \emph{binary feedback scenario}, we suppose that a 0-1 matrix $Y_{ij} = I_{(M_{ij} > q)}$, where $I_{\pi}$ is the indicator function that outputs $1$ if $\pi$ holds and $0$ otherwise, and $q\in[0,1]$ is a threshold.
We assume that only a subset of 1's of $Y$ are observed. Let $S$ be the total number of 1's in $Y$, and the observations $\Omega$ be a subset of positive entries sampled from $\{(i,j)|Y_{ij}=1\}$, then the sampling rate is $\rho = |\Omega|/S$. In particular, we use $A$ to represent the observations, where $A_{ij} = 1$ if $(i,j)\in\Omega$, and $A_{ij}=0$ otherwise. The recommendation problem is straight-forward: given the observation U-I matrix $A$, the objective is to recover $Y$ based on this observed sampling.

Inspired by the Regularized Loss Minimization (RLM) in which one minimizes an empirical loss plus a regularization term jointly~\cite{shalev2011stochastic}, we aim to find a matrix $X$ under some constraints to minimize the element-wise loss between $X$ and $A$ for each element, and it can be formulated in the following optimization problem:
\bqs\label{objCM}
\min_X \sum_{i=1}^n\sum_{j=1}^m \ell(X_{ij}, A_{ij}) + \lambda g(X),
\eqs
where $\lambda \geq 0$ is a trade-off parameter, $\ell(X,A)$ is a loss function between the matrices $X$ and $A$,
and $g(X)$ is a convex regularization term that constrains $X$ into simple sets, e.g. hyperplanes, balls, bound constraints, etc.

\section{Algorithm}
We propose to solve (\ref{objCM}) by two steps: 1) to derive a cost-sensitive loss function for $\ell(X,A)$ and
2) to learn a robust framework via a regularization term $g(X)$.

\vspace{-0.001in}
\subsection{Cost-Sensitive Learning}
\vspace{-0.001in}

Now we are ready to derive a cost-sensitive learning method for recommendation. We first introduce a new measurement. Then we exploit a learning schema by optimizing the cost-sensitive measure.

\subsubsection{Measurement}

To solve the problem (\ref{objCM}), traditional techniques~\cite{schmidt2005compound,weimer2008improving} exploited a cost-insensitive loss to minimize the mistake rates. However, in the recommendation scenario, the cost of missing a positive target is much higher than having a false-positive. Thus, we study new recommendation algorithms, which can optimize a more appropriate performance metric, such as the \emph{sum} of weighted \emph{recall} and \emph{specificity},
\bqs\label{sum_cost_sensitive}
sum = \mu_p \times recall + \mu_n \times specificity,
\eqs
where $0\leq\mu_p,\mu_n\leq 1$ and $\mu_p+\mu_n =1$. In general, the higher the \emph{sum} value, the better the performance.
Besides, another suitable metric is the total cost of the algorithm~\cite{elkan2001foundations}, which is defined as:
\bqs\label{cost_sensitive_measure}
cost = c_p \times M_p + c_n \times M_n,
\eqs
where $M_p$ and $M_n$ are the number of false negatives and false positives respectively, and $0\leq c_p,c_n\leq 1$ are the cost parameters for positive and negative classes with $c_p + c_n = 1$. The lower the cost value, the better the classification performance.

\subsubsection{Learning Method}

We propose a cost-sensitive learning function by optimizing two cost-sensitive measures above.
Before presenting the method, we first prove the following lemma that motivates our solution.

\begin{lemma}\label{cost_sensitive_lemma}
Consider a cost-sensitive learning problem in binary feedback scenario, the goal of maximizing the weighted sum in (\ref{sum_cost_sensitive}) or minimizing the weighted cost in (\ref{cost_sensitive_measure}) is equivalent to minimizing the following objective:
\bqs
\sum_{A_{ij}=+1} \alpha I_{(X_{ij}\leq q)} + \sum_{A_{ij}=0} I_{(X_{ij} > q)},
\eqs
where $\alpha=\frac{\mu_pT_n}{\mu_nT_p}$ for the maximization of the weighted sum, $T_p$ and $T_n$ are the number of positive examples and negative examples, respectively, $\alpha = \frac{c_p}{c_n}$ for the minimization of the weighted misclassification cost.
\end{lemma}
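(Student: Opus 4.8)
The plan is to reduce both optimization goals to a single minimization objective of the form $\alpha M_p + M_n$, exploiting the fact that the optimizer of a function is unchanged when the function is scaled by a strictly positive constant or shifted by an additive constant. Once the weighted sum and the weighted cost are each brought into this common form, I would rewrite the false-negative count $M_p$ and the false-positive count $M_n$ as sums of indicator functions induced by the thresholding prediction rule, which yields the stated objective directly.

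First, for the weighted sum in (\ref{sum_cost_sensitive}), I would express recall and specificity in terms of the error counts. Since $recall = (T_p - M_p)/T_p$ and $specificity = (T_n - M_n)/T_n$, and using $\mu_p + \mu_n = 1$, the sum collapses to
\[
sum = 1 - \Big(\frac{\mu_p}{T_p}M_p + \frac{\mu_n}{T_n}M_n\Big).
\]
Hence maximizing $sum$ is equivalent to minimizing $\frac{\mu_p}{T_p}M_p + \frac{\mu_n}{T_n}M_n$. Factoring out the positive constant $\mu_n/T_n$ leaves $\frac{\mu_p T_n}{\mu_n T_p} M_p + M_n$, which identifies $\alpha = \frac{\mu_p T_n}{\mu_n T_p}$.

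Second, the weighted cost in (\ref{cost_sensitive_measure}) is already a nonnegative linear combination $c_p M_p + c_n M_n$; factoring out the positive constant $c_n$ gives $\frac{c_p}{c_n} M_p + M_n$, so minimizing the cost is equivalent to minimizing $\alpha M_p + M_n$ with $\alpha = c_p/c_n$. To finish both cases, I would substitute the definitions of the error counts under the convention that $A_{ij}=1$ marks a positive and $A_{ij}=0$ a (treated) negative, with the model predicting positive exactly when $X_{ij} > q$. A false negative then occurs on a positive entry with $X_{ij} \leq q$, and a false positive on a zero entry with $X_{ij} > q$, so that $M_p = \sum_{A_{ij}=1} I_{(X_{ij}\leq q)}$ and $M_n = \sum_{A_{ij}=0} I_{(X_{ij}>q)}$, giving the claimed expression.

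There is no deep obstacle here; the argument is essentially bookkeeping. The two points that require care are (i) justifying that scaling by the positive constants $\mu_n/T_n$ and $c_n$ and discarding the additive constant leave the optimizer unchanged, and (ii) correctly matching each error type to its indicator, i.e.\ keeping the inequality directions straight ($X_{ij}\leq q$ for a missed positive versus $X_{ij}>q$ for a false alarm), so that the weight $\alpha$ attaches to the false-negative term as stated rather than to the false-positive term.
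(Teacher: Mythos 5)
Your proposal is correct and follows essentially the same route as the paper's own proof: both reduce the weighted sum to $1 - \frac{\mu_n}{T_n}\bigl(\alpha M_p + M_n\bigr)$ using $\mu_p+\mu_n=1$ and factor $c_n$ out of the weighted cost, relying on invariance of the optimizer under positive scaling and additive shifts. The only cosmetic difference is that you substitute the indicator representations $M_p = \sum_{A_{ij}=1} I_{(X_{ij}\leq q)}$ and $M_n = \sum_{A_{ij}=0} I_{(X_{ij}>q)}$ at the end, whereas the paper carries them inline from the start.
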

\begin{proof}
By analyzing the function of the weighted sum in (\ref{sum_cost_sensitive}), we can derive the following
\bqs\notag
sum & = \mu_p \frac{T_p - M_p}{T_p} + \mu_n \frac{T_n - M_n}{T_n} \\
    & = 1 - \frac{\mu_n}{T_n}\left[ \frac{\mu_pT_n}{\mu_nT_p}\sum_{A_{ij}=+1} I_{(X_{ij}\leq q)} + \sum_{A_{ij}=0} I_{(X_{ij} > q)} \right].
\eqs
Thus, maximizing \emph{sum} is equivalent to minimizing
\bqs\notag
\frac{\mu_pT_n}{\mu_nT_p}\sum_{A_{ij}=+1} I_{(X_{ij}\leq q)} + \sum_{A_{ij}=0} I_{(X_{ij} > q)}.
\eqs
Secondly, by analyzing the function of the weighted cost in (\ref{cost_sensitive_measure}), we can derive the following:
\bqs\notag
cost & = c_p \times M_p + c_n \times M_n \\
     & = c_n \left[ \frac{c_p}{c_n}\sum_{A_{ij}=+1} I_{(X_{ij}\leq q)} + \sum_{A_{ij}=0} I_{(X_{ij} > q)} \right].
\eqs
\noindent
Thus, minimizing \emph{cost} is equivalent to minimizing
\bqs\notag
\sum_{A_{ij}=+1} \frac{c_p}{c_n} I_{(X_{ij}\leq q)} + \sum_{A_{ij}=0} I_{(X_{ij} > q)}.
\eqs
Thus, the lemma holds by setting $\alpha = \frac{\mu_pT_n}{\mu_nT_p}$ for sum and $\alpha = \frac{c_p}{c_n}$ for cost.
\end{proof}
Lemma \ref{cost_sensitive_lemma} gives the explicit objective for optimization, but the indicator function is not convex. To facilitate the optimization task, we replace the indicator function by its convex surrogate, i.e. the following modified loss functions:
\bqs\label{cost_loss_fun_2}
\ell^{I}(X_{ij},A_{ij}) = \alpha I_{(A_{ij}=1)}\ell(X_{ij},1) + I_{(A_{ij}=0)}\ell(X_{ij},0),
\eqs
\bqs\label{cost_loss_fun_1}
\ell^{II}(X_{ij},A_{ij}) = I_{(A_{ij}=1)}\ell(X_{ij},\alpha) + I_{(A_{ij}=0)}\ell(X_{ij},0),
\eqs
where the squared loss $\ell(x,a) = \frac{1}{2}(x - a)^2$ is the commonly used penalty, which is optimal to the Gaussian noise~\cite{zhou2010stable}.

We would observe that for $\ell^{I}(X_{ij},A_{ij})$, the slope of the loss function changes for a specific class, leading to more ``aggressive" updating; for $\ell^{II}(X_{ij},A_{ij})$, the required margin for specific class changes compared to the traditional squared loss, resulting in more ``frequent" updating.

Traditional classification approach~\cite{rendle2009bpr} treats the observed data equally with unobserved one. In U-I matrices with spare observed data, this will be vulnerable to the majority class, i.e. most of unobserved examples are in the negative class, thus the hidden positives will be more likely to be predicted as the negative class, which consequently harms the performance. Therefore, we impose different costs for false negatives and false positives in the loss function.

\vspace{-0.001in}
\subsection{Robust Low-Rank Representation with Personalized Traits}
\vspace{-0.001in}

We intend to develop a robust framework to simultaneously pursue outliers and a low-rank latent structure. To achieve this goal, we propose a robust matrix decomposition framework, assuming that the U-I matrix can be decomposed as the sum of a low-rank component $U$ and a spare component $V$, where outliers or corrupted entries can be tracked. Such additive decompositions have been used in system identification~\cite{chandrasekaran2011rank}, PCA~\cite{candes2011robust} and graphical modeling~\cite{chandrasekaran2010latent}.


Denoted by $X$ the combination of $U$ and $V$, we define a function $f(\cdot)$,
\bqs\notag 
X  =  f\left(W\right) = \begin{bmatrix} I_n, I_n \end{bmatrix}W = U + V, 0\leq U_{ij},V_{ij}\leq1, \forall(i,j),
\eqs
where $I_n\in\mathbb{R}^{n\times n}$ is an identity matrix, and $W$ is a weight matrix decomposed into two components:
\bqs\label{decompositionW}
 \{W | W = \begin{bmatrix} U \\ V \end{bmatrix}, U\in\mathbb{R}^{n\times m}, V\in\mathbb{R}^{n\times m}\},
\eqs
where $W = [\w_{1},\ldots,\w_{m}]\in\mathbb{R}^{2n\times m}$ and $\w_{i} = \begin{bmatrix} \u_{i} \\ \v_{i} \end{bmatrix}\in\mathbb{R}^{2n}$ is the $i$-th column of $W$.
Given an instance $\w_{i}$, we recover the user rating $\x_{i}$ by the summation of $\u_i$ and $\v_i$,
\bqs\notag
\x_{i} = f(\w_i) = \begin{bmatrix} I_n, I_n \end{bmatrix}\w_i = \u_i + \v_i.
\eqs
This optimization problem can be formulated as follows: given that $U$ and $V$ are unknown, but $U$ is known to be low rank and $V$ is known to be sparse, we recover $A$ with $X = U + V$,
\bqs\label{original_obj}
\min_{U,V} \hbox{rank}(U) + \lambda\|V\|_0 \quad \hbox{s.t.} \quad \mathcal{P}_\Omega(A) = \mathcal{P}_\Omega(U + V),
\eqs
where $\mathcal{P}_\Omega(\cdot)$ is projection operator on the support of binary values $\Omega\in\{0,1\}^{n\times m}$.
However, (\ref{original_obj}) is a highly nonconvex optimization problem, which is computationally intractable (NP-hard)~\cite{amaldi1998approximability}. Fortunately, we can relax (\ref{original_obj}) by
replacing the $\ell^0$-norm with the $\ell^1$-norm, and the rank with the nuclear norm $\|U\|_* = \sum_i\sigma_i(U)$,
yielding the following convex surrogate:
\bqs\label{original_obj_relax}
\min_{U,V} \|U\|_{*} + \lambda\|V\|_{1} \quad \hbox{s.t.} \quad \mathcal{P}_\Omega(A) = \mathcal{P}_\Omega(U + V).
\eqs
To minimize the asymmetric cost of errors from different classes, a cost-sensitive learning model is embedded into this framework.
This model solves a slightly relaxed version of (\ref{original_obj_relax}), in which the equality constraint is replaced with the convex cost-sensitive loss function:
\bqs\label{original_obj_final}
\min_{U,V} \quad & \lambda_1\|U\|_{*} + \lambda_2\|V\|_1 + \sum_{i,j}\ell^{*}\left(U_{ij} + V_{ij}, A_{ij}\right), \\
& \hbox{s.t.} \quad 0\leq U_{ij},V_{ij}\leq1, \quad \forall(i,j)
\eqs
where $\lambda_{1}$/$\lambda_{2}$ are non-negative trade-off parameters and the loss $\ell^*(\cdot)$ where $*\in\{I,II\}$ is given in section A.
Directly optimizing (\ref{original_obj_final}) can be computationally expensive, e.g. algorithms based on matrix computation usually scale to $O(n^3)$~\cite{meyer2000matrix}. Moreover, the objective above includes non-smooth regularization terms, $\|U\|_*$ and $\|V\|_{1}$, thus direct calculation with sub-gradient methods leads to a slow convergence rate and a lack of a practical stopping criterion \cite{bagirov2014introduction}. Alternatively, we propose an accelerated proximal gradient schema to optimize the problem and derive a closed-form solution.

\vspace{-0.001in}
\subsection{Optimization}
\vspace{-0.001in}

Although the problem above can be solved by~\cite{vandenberghe1996semidefinite}, the composite function with linear constraints has not been studied in recommender systems. Motivated by \cite{beck2009fast}, we solve the problem (\ref{original_obj_final}) with an accelerated proximal gradient line (APGL) search method, which can iteratively optimize $U$ and $V$ with closed-form solutions via minimizing the objective (\ref{original_obj_final}) of weight matrix at round $t$,
\bqs\label{objective_proximal_gradient}
W^{t+1} & = \underset{W}{\operatorname{argmin}} \sum_{i,j}\ell^{*}\left(W^t_{ij}\right) + \lambda_1\|U\|_{*} + \lambda_2\|V\|_1, \\
& \hbox{s.t.} \quad 0\leq U_{ij},V_{ij}\leq1,\forall(i,j), \quad W = \begin{bmatrix} U \\ V \end{bmatrix}
\eqs
where $\ell^{*}(W_{ij}^{t}):=\ell^*\left(U_{ij}^t+V_{ij}^t,A_{ij}\right)$.
The APGL minimizes the objective (\ref{objective_proximal_gradient}) by iteratively updating $W$ with the update rule $W^{t+1} = \Pi_{\pi}\left(W^{t} - \eta\nabla\ell^{*}_{(t)}\right)$ where $\Pi_\pi(W)$ is the projection of $W$ onto the set $\pi$ and $\eta > 0$ is a stepsize. Using the first-order Taylor expansion of $\ell^{*}(W^t)$, the problem (\ref{objective_proximal_gradient}) can be derived by the following equation given $\eta > 0$,
\bqs\label{linearization_projection}
\underset{W}{\operatorname{min}} \sum_{i=1}^m & \left( \ell^{*}(\w_i^{t}) + \left\langle \nabla_{\w_i} \ell^{*}(\w_i^t),\w_i - \w_i^{t} \right\rangle
          + \frac{1}{2\eta}\mathcal{D}(\w_i,\w_i^t)\right)  \\
          & + \lambda_1\|U\|_{*} + \lambda_2\|V\|_1,
\eqs
where $\nabla_{\w_i} \ell^{*}(\w_i^t)$ denotes the derivative of $\ell^{*}(\cdot)$ w.r.t $\w_i$ at $\w_i=\w_i^t$ and $\mathcal{D}(\cdot,\cdot)$ measures the Euclidean distance between variables.
Eq (\ref{linearization_projection}) contains non-smooth regularization terms. We next solve (\ref{linearization_projection}) with a proximal operator problem.
\begin{lemma}\label{optimation_UV}
Given $W = \begin{bmatrix} U \\ V \end{bmatrix}$, (\ref{linearization_projection}) can turn into a proximal operator problem in two steps:

\noindent
Step I:
\bqs\notag
\hat{\u}_i^{t} = \u_i^{t} - \eta\frac{\partial\ell^*(\u_i^{t}+\v_i^{t},\a_i)}{\partial\u_i^{t}},
\hat{\v}_i^{t} = \v_i^{t} - \eta\frac{\partial\ell^*(\u_i^{t}+\v_i^{t},\a_i)}{\partial\v_i^{t}},
\eqs
Step II:
\bqs\label{OptimalU}
&\tilde{U}^{t+1}  =  \underset{U}{\operatorname{argmin}} \frac{1}{2\eta_t}\sum_{i=1}^m\|\u_i - \hat{\u}_i^t\|_F^2 + \lambda_1\|U\|_{*},
\eqs
\vspace{-0.03in}
\bqs\label{OptimalV}
&\tilde{V}^{t+1}  =  \underset{V}{\operatorname{argmin}} \frac{1}{2\eta_t}\sum_{i=1}^m\|\v_i - \hat{\v}_i^t\|_F^2 + \lambda_2\sum_{i=1}^m\|\v_i\|_{1},
\eqs
where $\eta_t > 0$ is the stepsize at round $t$.
\end{lemma}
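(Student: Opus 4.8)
The plan is to recognize (\ref{linearization_projection}) as a single proximal-gradient step and to reduce it to the two displayed subproblems by (i) discarding terms that are constant in the optimization variable, (ii) completing the square in each block, and (iii) exploiting the block-diagonal structure so that the $U$- and $V$-updates decouple. No heavy machinery is needed; the work is in the bookkeeping.

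First I would drop the term $\ell^{*}(\w_i^t)$, which does not depend on $W$, and split each column through $\w_i=[\u_i;\v_i]$. Because $\mathcal{D}$ is the squared Euclidean distance, it separates additively as $\mathcal{D}(\w_i,\w_i^t)=\|\u_i-\u_i^t\|_F^2+\|\v_i-\v_i^t\|_F^2$, and the linear term likewise splits as $\langle\nabla_{\w_i}\ell^{*}(\w_i^t),\w_i-\w_i^t\rangle=\langle\nabla_{\u_i}\ell^{*},\u_i-\u_i^t\rangle+\langle\nabla_{\v_i}\ell^{*},\v_i-\v_i^t\rangle$, each partial derivative being evaluated at $\w_i^t$.

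Next, for the $U$-block I would complete the square. Writing $\g=\nabla_{\u_i}\ell^{*}(\w_i^t)$, one has
\[
\langle\g,\u_i-\u_i^t\rangle+\frac{1}{2\eta}\|\u_i-\u_i^t\|_F^2
=\frac{1}{2\eta}\bigl\|\u_i-(\u_i^t-\eta\g)\bigr\|_F^2-\frac{\eta}{2}\|\g\|_F^2 .
\]
The last term is constant in $\u_i$ and can be discarded, while the residual center is exactly $\hat{\u}_i^t=\u_i^t-\eta\,\partial\ell^{*}(\u_i^t+\v_i^t,\a_i)/\partial\u_i^t$ of Step I; the identical manipulation for the $V$-block yields $\hat{\v}_i^t$. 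This establishes Step I and rewrites the objective, up to additive constants, as $\frac{1}{2\eta}\sum_i\bigl(\|\u_i-\hat{\u}_i^t\|_F^2+\|\v_i-\hat{\v}_i^t\|_F^2\bigr)+\lambda_1\|U\|_{*}+\lambda_2\|V\|_1$.

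Finally, I would invoke separability. Since $\|U\|_{*}$ depends only on $U$, $\|V\|_1=\sum_i\|\v_i\|_1$ depends only on $V$, and the quadratic terms already decouple across blocks, the minimization factors into the two independent problems (\ref{OptimalU}) and (\ref{OptimalV}), which are precisely the proximal operators for the nuclear norm and the entrywise $\ell_1$-norm. The step I expect to require the most care is not any single computation but the argument that makes this decoupling legitimate: one must verify that the coupling between $U$ and $V$—entering only through the shared argument $U_{ij}+V_{ij}$ of $\ell^{*}$—has been fully absorbed into the constant centers $\hat{\u}_i^t,\hat{\v}_i^t$, leaving no cross term in the quadratic part. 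Because both partial gradients equal $\partial\ell^{*}/\partial X_{ij}$ evaluated at $X_{ij}=U_{ij}^t+V_{ij}^t$, the linearization severs the coupling exactly, and the decoupling is immediate; still, it is worth stating explicitly rather than assuming it.
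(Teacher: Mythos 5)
Your proposal is correct and follows essentially the same route as the paper's own proof: splitting the quadratic proximity term and the linearized loss blockwise through $\w_i=[\u_i;\v_i]$, completing the square to obtain the centers $\hat{\u}_i^t,\hat{\v}_i^t$ (the paper compresses this into ``with some manipulations''), and then using the decomposability of $\lambda_1\|U\|_{*}+\lambda_2\|V\|_1$ to separate the two subproblems. Your explicit remark that the $U$--$V$ coupling through the shared argument $U_{ij}+V_{ij}$ is fully severed by the linearization (since both partial gradients equal $\partial\ell^{*}/\partial X_{ij}$ at the current iterate) is a point the paper leaves implicit, and stating it strengthens rather than changes the argument.
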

\vspace{-0.01in}
\noindent
We give the proof in the Appendix. Both Eq (\ref{OptimalU}) and (\ref{OptimalV}) admit a closed-form solution.

\subsubsection{Computation of $\tilde{U}$}

Inspired by~\cite{boyd2004convex}, we show that the solution to (\ref{OptimalU}) can be obtained via solving a simple convex optimization problem.
\vspace{-0.001in}
\begin{thm}\label{them_U}
Assume the eigendecomposition of $\hat{U}^t = P\hat{\Sigma} Q^{\top}\in\mathbb{R}^{n\times m}$ where $r = $\textrm{rank}$(\hat{U}^t)$, $P\in\mathbb{R}^{n\times r}$, $Q\in\mathbb{R}^{m\times r}$, and $\hat{\Sigma}=$\textrm{diag}$(\hat{\sigma}_1,\ldots,\hat{\sigma}_r)\in\mathbb{R}^{r\times r}$.
Let $\{\sigma_i\}_{i=1}^r, \sigma_i \geq 0$, the problem (\ref{OptimalU}) can turn into an equivalent form,
\bqs
\label{OptimalSigma}
\min_{\{\sigma_i\}_{i=1}^r} & \frac{1}{2\eta_t}\sum_{i=1}^r(\sigma_i - \hat{\sigma}_i)^2 +  \lambda_1 \sum_{i=1}^r \sigma_i.
\eqs
Its optimal solution is $\sigma_i^* = [\hat{\sigma}_i - \eta_t\lambda_1]_{+}$ for $i\in[1,r]$, where $[x]_{+}=\max(0,x)$.
Denoted by $\Sigma^* =$diag$(\sigma_1^*,\ldots,\sigma_r^*)\in\mathbb{R}^{r\times r}$, the optimal solution to Eq. (\ref{OptimalU}) is given by,
\bqs
\label{OptimalSolutionU}
\tilde{U}^* = P\Sigma^* Q^{\top}.
\eqs
\end{thm}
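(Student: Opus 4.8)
The plan is to recognize (\ref{OptimalU}) as the proximal operator of the nuclear norm---i.e.\ singular value thresholding---and to establish the closed form in two stages: first reduce the matrix problem to a scalar problem over the singular values, then solve that scalar problem by soft-thresholding.

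To begin, I would collapse the column-wise quadratic term using $\sum_{i=1}^m\|\u_i - \hat{\u}_i^t\|_F^2 = \|U - \hat{U}^t\|_F^2$, so that (\ref{OptimalU}) reads $\min_U \frac{1}{2\eta_t}\|U-\hat{U}^t\|_F^2 + \lambda_1\|U\|_{*}$. Expanding the Frobenius term gives $\frac{1}{2\eta_t}\big(\|U\|_F^2 - 2\langle U,\hat{U}^t\rangle + \|\hat{U}^t\|_F^2\big) + \lambda_1\|U\|_{*}$. Since both $\|U\|_F^2 = \sum_i\sigma_i(U)^2$ and $\|U\|_{*} = \sum_i\sigma_i(U)$ depend only on the singular values of $U$, for any fixed spectrum the objective is minimized by maximizing the inner product $\langle U,\hat{U}^t\rangle = \mathrm{tr}(U^{\top}\hat{U}^t)$.

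The crux---and the step I expect to be the main obstacle---is showing the optimal $U$ inherits the singular vectors $P,Q$ of $\hat{U}^t$. This follows from von Neumann's trace inequality, $\langle U,\hat{U}^t\rangle \leq \sum_i\sigma_i(U)\,\sigma_i(\hat{U}^t)$, with equality precisely when $U$ and $\hat{U}^t$ admit a simultaneous singular value decomposition. Hence the best $U$ for each fixed set of singular values is $U = P\Sigma Q^{\top}$ with $\Sigma = \mathrm{diag}(\sigma_1,\ldots,\sigma_r)$. Substituting this back yields $\|U-\hat{U}^t\|_F^2 = \sum_{i=1}^r(\sigma_i-\hat{\sigma}_i)^2$ and $\|U\|_{*} = \sum_{i=1}^r\sigma_i$, which is exactly the reduced problem (\ref{OptimalSigma}).

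Finally, (\ref{OptimalSigma}) is separable across $i$, each summand being the strictly convex scalar problem $\min_{\sigma_i\geq 0}\frac{1}{2\eta_t}(\sigma_i-\hat{\sigma}_i)^2 + \lambda_1\sigma_i$. Ignoring the constraint, the derivative vanishes at $\sigma_i = \hat{\sigma}_i - \eta_t\lambda_1$; imposing $\sigma_i\geq 0$ truncates the negative part and gives the soft-threshold $\sigma_i^* = [\hat{\sigma}_i - \eta_t\lambda_1]_{+}$. Reassembling the diagonal into $\Sigma^*$ then produces the claimed optimum $\tilde{U}^* = P\Sigma^* Q^{\top}$.
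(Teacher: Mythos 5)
Your proof is correct, and it is the standard derivation of the singular value thresholding operator. Note, however, that the paper itself supplies no proof of this theorem at all: it is stated bare, with only the remark that it is ``inspired by'' Boyd and Vandenberghe, and the appendix proves only Lemma 2 and Theorem 3. So your argument fills an omission rather than paralleling an existing proof. The route you take (reduce to the spectrum via von Neumann's trace inequality, then soft-threshold each singular value) is the classical variational one; the common alternative, had the authors written one, would be to verify directly that $\tilde{U}^* = P\Sigma^* Q^{\top}$ satisfies the first-order condition $0 \in \frac{1}{\eta_t}(\tilde{U}^* - \hat{U}^t) + \lambda_1 \partial\|\tilde{U}^*\|_*$ using the known characterization of the nuclear-norm subdifferential. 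Your approach buys a self-contained derivation of the reduced problem (\ref{OptimalSigma}), which the verification route would simply bypass.

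Two small points deserve a sentence in a full write-up, though neither threatens the result. First, a candidate $U$ may have rank larger than $r = \mathrm{rank}(\hat{U}^t)$, so the reduction to exactly $r$ variables $\sigma_1,\ldots,\sigma_r$ is not automatic: in von Neumann's inequality any singular values of $U$ beyond the $r$-th pair with zero singular values of $\hat{U}^t$, contribute $\frac{1}{2\eta_t}\sigma^2 + \lambda_1\sigma \geq 0$ with no compensating cross term, and are therefore optimally zero --- equivalently, they solve the scalar subproblem with $\hat{\sigma} = 0$, whose solution is $0$. Second, equality in von Neumann's inequality requires the two spectra to be simultaneously ordered, so the reduced problem implicitly carries an ordering constraint; this is consistent with your final answer because soft-thresholding is monotone, hence $\sigma_i^* = [\hat{\sigma}_i - \eta_t\lambda_1]_+$ inherits the ordering of the $\hat{\sigma}_i$. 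Incidentally, the theorem statement's ``eigendecomposition'' should read ``singular value decomposition,'' which is how you, correctly, treat it.
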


\subsubsection{Computation of $\tilde{V}$}

The problem in (\ref{OptimalV}) is a Lasso problem and admits a closed-form solution for each entry,
\bqs
\label{OptimalSolutionV}
\tilde{V}_{ki}^{t+1} = \mathcal{K}\left( \left[V^{t} - \eta\frac{\partial\ell^*(U^{t}+V^{t},A)}{\partial V^{t}}\right]_{ki} , \eta_t\lambda_2 \right),
\eqs
where $[\cdot]_{ki}$ returns the $(k,i)$-th element of a matrix, sgn$(\cdot)$ defines the sign function, $|\cdot|$ gives the absolute value when the argument is a scalar, and $\mathcal{K}(a,b) = [|a| - b]_{+}\hbox{sign}(a)$.
Specifically, if $\hat{V}_{ki} = |V_{ki} - \frac{\partial \ell^*}{\partial V_{ki}}| < \eta_t\lambda_2$, $\forall k\in[1,n]$, the column $\v_{i}$ decays to \textbf{0} and thus the user will obey only the low-rank structure $\u_i$; if $\hat{V}_{ki} > \eta_t\lambda_2$, non-zero entries hold in the column $\v_i$, and $\x_i$ will be the summation of $\v_i$ and $\u_i$.

Next, APGL iteratively updates $U^{t+1}$ and $V^{t+1}$ with a stepsize $\tau_{t+1} = \frac{1 + \sqrt{1 + 4\tau_{t}^2}}{2}$:
\bqs\label{update_UVE}
& U^{t+1} = \mathcal{P}_{[0,1]}\left(\tilde{U}^{t+1} + \frac{\tau_t - 1}{\tau_{t+1}}(\tilde{U}^{t+1} - \tilde{U}^{t})\right),\\
& V^{t+1} = \mathcal{P}_{[0,1]}\left(\tilde{V}^{t+1} + \frac{\tau_t - 1}{\tau_{t+1}}(\tilde{V}^{t+1} - \tilde{V}^{t})\right),
\eqs
where $\mathcal{P}_{[0,1]}(x)$ is a projection of $x$ onto the set $x\in[0,1]$. The optimization of $U$ and $V$ has a convergence rate of $O(1/t^2)$ where $t$ is the iteration number, guaranteed by the convergence property of APGL~\cite{beck2009fast}, and it is faster than subgradient methods.

\subsubsection{Algorithm}

We are ready to present a Robust Cost-Sensitive Learning for Recommendation, namely CSRR. We adopt a mistake-driven update rule: let an update be issued when an error occurs, i.e. $\ell_t^* > 0$.
When using cost-sensitive loss function (\ref{cost_loss_fun_2}), the subgradient can be expressed as:
\bqs\label{Subgradient_2}
\frac{\partial \ell^{I}}{\partial X_{ij}} = \left\{
                   \begin{array}{ll}
                     \alpha I_{(A_{ij}=1)}( X_{ij} - 1) + I_{(A_{ij}=0)}X_{ij}, \quad \ell_t^{I}>0; \\
                     0,  \quad otherwise.
                   \end{array}
                 \right.
\eqs
We refer to the algorithm above as ``CSRR-I" for short.
Specifically, when using the loss function (\ref{cost_loss_fun_1}), the subgradient can be expressed as:
\bqs\label{Subgradient_1}
\frac{\partial \ell^{II}}{\partial X_{ij}}  = \left\{
                   \begin{array}{ll}
                     I_{(A_{ij}=1)}( X_{ij} - \alpha) + I_{(A_{ij}=0)}X_{ij}, \quad \ell_t^{II}>0; \\
                     0,  \quad otherwise.
                   \end{array}
                 \right.
\eqs
We refer to the above cost-sensitive learning algorithm as ``CSRR-II" for short.
We summarize the algorithm ``CSRR-I" and ``CSRR-II" in Alg. \ref{CSRCP}.
Note that $X = f(W) = U + V$ and $\frac{\partial X}{\partial U} = \frac{\partial X}{\partial V} = 1$, we obtain $\frac{\partial \ell^{*}}{\partial U} = \frac{\partial \ell^{*}}{\partial X}\frac{\partial X}{\partial U} = \frac{\partial \ell^{*}}{\partial X}$ and
$\frac{\partial \ell^{*}}{\partial V} = \frac{\partial \ell^{*}}{\partial X}\frac{\partial X}{\partial V} = \frac{\partial \ell^{*}}{\partial X}$ $\forall*\in\{I, II\}$.

In contrast to the work~\cite{hsieh2015pu}, the proposed loss function is derived by cost-sensitive learning, and the output is the summation of the low-rank and spare structures.

\vspace{-0.001in}

\begin{algorithm}[t]
\caption{CSRR} \label{CSRCP}
\begin{algorithmic}[1]
\STATE {\bf Input}: an observed U-I matrix $A\in\mathbb{R}^{n\times m}$, the maximal number of iterations $T$, $\eta$ and $\lambda$. 
\STATE {\bf Initialize}: $U^{0}, V^{0}, \tilde{U}^{0}, \tilde{V}^{0}, \tau_{0}=1$;
\FOR{$t=0,\ldots, T$}
    \STATE {\bf Predict}: $X^t = f(W^t) = U^t + V^t$;
    \STATE {\bf Suffer Loss}: $\ell_t = \ell^*(X^t,A);$
    \IF {$\ell_t > 0$}
        \STATE  Compute subgradient $\nabla\ell^*$ with Eq (\ref{Subgradient_2}) or Eq (\ref{Subgradient_1});
        \STATE  Compute $\tilde{U}^{t+1}$ and $\tilde{V}^{t+1}$ with Eq (\ref{OptimalSolutionU}) and (\ref{OptimalSolutionV});
        \STATE  Update $U^{t+1}$, $V^{t+1}$ and $\tau_{t+1}$ with Eq (\ref{update_UVE});
    \ENDIF
\ENDFOR
\STATE {\bf Output}: $X^T$, $U^T$ and $V^T$;
\end{algorithmic}
\end{algorithm}

\subsection{An Efficient Optimization}
\vspace{-0.001in}

Though CSRR with the nuclear norm minimization (NNM) can perform stably without knowing the target rank in advance, it is limited by the necessity of executing expensive singular value decomposition (SVD) for multiple times (step 8, Alg. \ref{CSRCP}). At less expense, bilinear factorization (BF)~\cite{babacan2012sparse,wang2012probabilistic} is an alternative by replacing $U$ with $P^{\top}Q$, where the product of two factor matrices $P\in\mathbb{R}^{d\times n}$ and $Q\in\mathbb{R}^{d\times m}$ implicitly guarantees that the rank of $P^{\top}Q$ is never over $d$, typically $d\ll \min(m,n)$. Theorem \ref{NNM_to_BF} provides a bridge between NNM and BF models.
\begin{thm}\label{NNM_to_BF}
For any matrix $U\in\mathbb{R}^{n\times m}$, the following relationship holds~\cite{mazumder2010spectral}:
\bqs\notag
\|U\|_* = \min_{P,Q} \frac{1}{2}\|P\|_F^2 + \frac{1}{2}\|Q\|_F^2 \quad \hbox{s.t.} \quad U = P^{\top}Q.
\eqs
If $\hbox{rank}(U) = d \leq \min(m,n)$, then the minimum solution above is attained at a factor decomposition $U = P^{\top}Q$, where $P\in\mathbb{R}^{d\times n}$ and $Q\in\mathbb{R}^{d\times m}$.
\end{thm}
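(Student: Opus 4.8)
The plan is to establish the identity by proving two matching inequalities and then exhibiting an explicit minimizer through the singular value decomposition (SVD), after which the dimension claim follows immediately.

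First I would prove the lower bound $\|U\|_* \leq \frac{1}{2}\|P\|_F^2 + \frac{1}{2}\|Q\|_F^2$ for every admissible factorization $U = P^{\top}Q$. The key tool is the dual characterization of the nuclear norm, $\|U\|_* = \max_{\|Z\|_2 \leq 1}\mathrm{tr}(Z^{\top}U)$, where $\|\cdot\|_2$ denotes the spectral norm. Substituting $U = P^{\top}Q$ and using $\mathrm{tr}(Z^{\top}P^{\top}Q) = \mathrm{tr}((PZ)^{\top}Q)$, I would apply the matrix Cauchy--Schwarz inequality $\mathrm{tr}(A^{\top}B) \leq \|A\|_F\|B\|_F$ followed by the bound $\|PZ\|_F \leq \|P\|_F\|Z\|_2$, giving $\mathrm{tr}((PZ)^{\top}Q) \leq \|PZ\|_F\|Q\|_F \leq \|P\|_F\|Q\|_F$ whenever $\|Z\|_2\leq 1$. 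Taking the maximum over such $Z$ yields $\|U\|_* \leq \|P\|_F\|Q\|_F$, and the arithmetic--geometric mean inequality $\|P\|_F\|Q\|_F \leq \frac{1}{2}(\|P\|_F^2 + \|Q\|_F^2)$ closes this direction.

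Next I would show the bound is tight by constructing a factorization that attains it. Writing the thin SVD $U = \Phi\Sigma\Psi^{\top}$ with $\Sigma = \mathrm{diag}(\sigma_1,\ldots,\sigma_d)$ the nonzero singular values, $\Phi\in\mathbb{R}^{n\times d}$ and $\Psi\in\mathbb{R}^{m\times d}$ having orthonormal columns, I set $P = \Sigma^{1/2}\Phi^{\top}$ and $Q = \Sigma^{1/2}\Psi^{\top}$. Then $P^{\top}Q = \Phi\Sigma^{1/2}\Sigma^{1/2}\Psi^{\top} = U$, and orthonormality gives $\|P\|_F^2 = \mathrm{tr}(\Sigma^{1/2}\Phi^{\top}\Phi\Sigma^{1/2}) = \mathrm{tr}(\Sigma) = \|U\|_*$ and likewise $\|Q\|_F^2 = \|U\|_*$. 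Hence $\frac{1}{2}\|P\|_F^2 + \frac{1}{2}\|Q\|_F^2 = \|U\|_*$, matching the lower bound, so the minimum equals $\|U\|_*$ and is attained at this pair. Since $\Sigma$ is $d\times d$ when $\mathrm{rank}(U)=d$, the constructed factors have exactly $d$ rows, i.e. $P\in\mathbb{R}^{d\times n}$ and $Q\in\mathbb{R}^{d\times m}$, which is the stated dimension claim.

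The SVD construction and the AM--GM step are routine; I expect the main obstacle to be justifying the lower bound cleanly. In particular one must invoke the correct dual pairing between the nuclear and spectral norms and verify the two auxiliary matrix inequalities, and care is needed to confirm that the constraint $\|Z\|_2 \leq 1$ is exactly what makes the maximization recover $\|U\|_*$ rather than a larger quantity. An alternative to the dual-norm route would be to bound $\|P^{\top}Q\|_* = \sum_i \sigma_i(P^{\top}Q)$ directly via von Neumann's trace inequality, but the dual characterization keeps the argument shortest.
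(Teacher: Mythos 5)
Your proof is correct and complete. Since the paper supplies no proof of this theorem at all --- it imports the result wholesale from the cited reference \cite{mazumder2010spectral} (and points to \cite{recht2010guaranteed} for the surrounding theory) --- there is no in-paper argument to compare against; what you have written is precisely the standard argument from that literature. Both halves check out: the dual pairing $\|U\|_* = \max_{\|Z\|_2 \leq 1}\mathrm{tr}(Z^{\top}U)$ is the correct Schatten-$1$/Schatten-$\infty$ duality, the chain $\mathrm{tr}\bigl((PZ)^{\top}Q\bigr) \leq \|PZ\|_F\|Q\|_F \leq \|P\|_F\|Z\|_2\|Q\|_F$ uses valid matrix inequalities with consistent dimensions, and AM--GM closes the lower bound for a factorization of \emph{any} inner dimension $k$, which is the right level of generality since the constraint $U = P^{\top}Q$ does not fix $k$. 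The balanced splitting $P = \Sigma^{1/2}\Phi^{\top}$, $Q = \Sigma^{1/2}\Psi^{\top}$ from the thin SVD then attains the bound with exactly $d = \mathrm{rank}(U)$ rows, which simultaneously establishes the minimum value and the dimension claim in the theorem's second sentence (the degenerate case $U = 0$ is trivial). Your closing remark is also apt: the dual-norm route and von Neumann's trace inequality are the two standard ways to get the lower bound, and they are essentially interchangeable here; the one you chose keeps the argument shortest.
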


Such nuclear norm factorization is well established in recent work~\cite{recht2010guaranteed}. Substituting BF model into (\ref{original_obj_final}), we solve $P$ and $Q$ with the projected gradient scheme in Lemma \ref{optimation_UV},
\bqs\label{MF_objective}
 & \min_{P,Q} \frac{1}{2\eta} \left(\|P - \hat{P}^t\|^2_F + \|Q - \hat{Q}^t\|^2_F \right) + \frac{\lambda_1}{2}\left(\|P\|_F^2 + \|Q\|_F^2\right),\\
 & \hbox{s.t.} \quad X^t = P^{t\top}Q^t+V^t, \quad  0 \leq V_{ij},(P^{\top}Q)_{ij} \leq 1, \forall(i,j)
\eqs
where
\bqs\label{P_Q_proximal}
\hat{P}^t = P^t - \eta \frac{\partial\ell^*_t}{\partial X^t}\frac{\partial X^t}{\partial P^t}, \quad \hat{Q}^t = Q^t - \eta \frac{\partial\ell^*_t}{\partial X^t}\frac{\partial X^t}{\partial Q^t},
\eqs
for $\forall*\in\{I, II\}$. We use $L_{2}$-norm regularization to avoid overfitting when $d$ is larger than the intrinsic rank.

Eq. (\ref{MF_objective}) is biconvex, i.e. fixing $P$ the problem is convex on $Q$, and vice-versa. Their updating rule can be derived by setting the derivatives with respect to $P$ or $Q$ to zero by fixing the other one,
\bqs\label{computation_PQ}
P = \mathcal{P}_{[0,\frac{1}{\sqrt{d}}]}\left(\frac{1}{1 + \eta\lambda_1}\hat{P}^t\right), \quad 
Q = \mathcal{P}_{[0,\frac{1}{\sqrt{d}}]}\left(\frac{1}{1 + \eta\lambda_1}\hat{Q}^t\right), 
\eqs
where $\mathcal{P}_{[0,\frac{1}{\sqrt{d}}]}(\cdot)$ guarantees $(P^{\top}Q)_{ij}\in[0,1]$. The above two steps will iterate until convergence, then $V$ is updated with a projection of (\ref{OptimalSolutionV}). We refer to this refined efficient algorithm as "CSRR-e" for short. We summarize "CSRR-e" in Alg. \ref{CSRCP2}.


\subsection{Theoretical Analysis}

\begin{algorithm}[t]
\caption{CSRR-e} \label{CSRCP2}
\begin{algorithmic}[1]
\STATE {\bf Input}: an observed U-I matrix $A\in\mathbb{R}^{n\times m}$, the maximal number of iterations $T$ and $\eta$. 
\STATE {\bf Initialize}: $P^{0}, Q^{0}$ and $V^{0}$;
\FOR{$t=0,\ldots, T$}
    \STATE {\bf Predict}: $X^t = f(W^t) = P^{t\top}Q^t + V^t$;
    \STATE {\bf Suffer Loss}: $\ell_t = \ell^*(X^t,A) > 0$;
    \STATE Let $k = 0$;
    \REPEAT
        \STATE  Compute Proximal Gradient $\hat{P}^t_{(k)}$ and $\hat{Q}^t_{(k)}$ as (\ref{P_Q_proximal});
        \STATE  Update $P^{t}_{(k)}$ and $Q^{t}_{(k)}$ as Eq (\ref{computation_PQ});
        \STATE  $k = k + 1$;
    \UNTIL convergence
    \STATE  Let $P^{t+1} = P^{t}_{(k)}$, $Q^{t+1} = Q^{t}_{(k)}$;
    \STATE  Compute $V^{t+1} = \mathcal{P}_{[0,1]}(\tilde{V}^{t+1})$ with $\tilde{V}^{t+1}$ in Eq (\ref{OptimalSolutionV});
\ENDFOR
\STATE {\bf Output}: $X^T$, $V^T$, $P^T$ and $Q^T$;
\end{algorithmic}
\end{algorithm}

Although recommender systems were extensively studied recently, very few work has formally investigated it in the cost-sensitive measures. Below, we theoretically analyze the performance of the proposed CSRR in terms of the cost-sensitive loss.

To ease our discussion, we assume that $\mathcal{X} := \{ X\in\mathbb{R}^{n\times m}  | \|X\|_* \leq \epsilon, 0\leq X_{ij} \leq 1, \forall (i,j)\}$. The expected error can be formulated as $\mathbb{E}[R_{l^*}(X)] = \mathbb{E}[\frac{1}{mn}\sum_{i,j}\ell^*(X_{ij},A_{ij})]$, and the empirical error as $\hat{R}_{l^*}(X) = \frac{1}{mn}\sum_{i,j}\ell^*(X_{ij},A_{ij})$.

Inspired by the work in~\cite{hsieh2015pu}, we begin with the following theory that gives the loss bound of the proposed CSRR. The proof is given in the Appendix.
\begin{thm}\label{expectlossbound_thm}
Assume that $X\in\mathcal{X}$, then with probability at least $1 - \delta$,
\bqs\label{expectempiricalloss}
& \mathbb{E}[R_{l^*}(X)] - \min_{X\in\mathcal{X}}\hat{R}_{l^*}(X) \\
\leq & C\frac{\epsilon\alpha(\sqrt{n} + \sqrt{m} + \sqrt[4]{S})}{mn} + \frac{\alpha\sqrt{\log(2/\delta)}}{\sqrt{mn}},
\eqs
where $C$ is a constant, $\alpha$ is the bias of imbalanced classes.
\end{thm}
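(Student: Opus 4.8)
The plan is to control a one-sided \emph{uniform deviation} between empirical and expected cost over the constraint set $\mathcal{X}$, split into a concentration piece and a Rademacher-complexity piece, and then to reproduce the $\sqrt[4]{S}$ factor through a sharp spectral-norm estimate for a random sign matrix whose heavy weight lives on the positive support. First I would reduce the left-hand side to a supremum. Since the expectation is over the random observation $A$, we have $\mathbb{E}[R_{l^*}(X)] = \mathbb{E}_A[\hat R_{l^*}(X)]$, and $\min_{X'\in\mathcal{X}}\hat R_{l^*}(X') \le \hat R_{l^*}(X)$ for every $X\in\mathcal{X}$, so
\[
\mathbb{E}[R_{l^*}(X)] - \min_{X'\in\mathcal{X}}\hat R_{l^*}(X') \;\le\; \sup_{X\in\mathcal{X}}\big(\mathbb{E}_A[\hat R_{l^*}(X)] - \hat R_{l^*}(X)\big) \;=:\; \Phi(A).
\]
It then suffices to bound $\Phi(A)$ with high probability, and the two terms of the theorem will come from the concentration of $\Phi$ and from its mean.

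Next I would establish concentration of $\Phi(A)$ about its expectation. Because each map $x\mapsto\ell^{*}(x,a)$ is a (weighted) squared loss evaluated at $X_{ij}\in[0,1]$, it is bounded by a quantity of order $\alpha$; hence altering a single entry of $A$ perturbs $\hat R_{l^*}$ by at most $O(\alpha/(mn))$. McDiarmid's bounded-difference inequality then yields, with probability at least $1-\delta$,
\[
\Phi(A) \;\le\; \mathbb{E}[\Phi(A)] + \frac{\alpha\sqrt{\log(2/\delta)}}{\sqrt{mn}},
\]
which is exactly the second term of the bound. The remaining task is to show $\mathbb{E}[\Phi(A)] \le C\,\epsilon\alpha(\sqrt n+\sqrt m+\sqrt[4]{S})/(mn)$.

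For the mean term I would use symmetrization followed by contraction and norm duality. Standard symmetrization gives $\mathbb{E}[\Phi(A)] \le 2\,\mathbb{E}_{A,\sigma}\sup_{X\in\mathcal{X}}\frac{1}{mn}\sum_{i,j}\sigma_{ij}\,\ell^{*}(X_{ij},A_{ij})$ with i.i.d.\ Rademacher $\sigma_{ij}$. Splitting the sum over the positive and the unobserved entries and applying Talagrand's contraction lemma with the per-entry Lipschitz constants of the cost-sensitive squared losses in (\ref{cost_loss_fun_2})--(\ref{cost_loss_fun_1}) (constant of order $\alpha$ on the $A_{ij}=1$ locations and of order $1$ elsewhere) strips away the loss, and the nuclear/spectral duality $\sup_{\|X\|_*\le\epsilon}\langle\Sigma,X\rangle=\epsilon\|\Sigma\|_{\mathrm{op}}$ leaves
\[
\mathbb{E}[\Phi(A)] \;\le\; \frac{C\alpha\,\epsilon}{mn}\,\mathbb{E}_{\sigma}\|\Sigma\|_{\mathrm{op}},
\]
where $\Sigma$ is the Rademacher matrix carrying the asymmetric scaling of the loss, i.e.\ the heavier weight is concentrated on the $S$ positions with $Y_{ij}=1$.

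The crux, and the step I expect to be the main obstacle, is estimating $\mathbb{E}_\sigma\|\Sigma\|_{\mathrm{op}}$ \emph{sharply} enough to expose $\sqrt[4]{S}$: the crude uniform bound $\mathbb{E}\|\Sigma\|_{\mathrm{op}}=O(\sqrt n+\sqrt m)$ hides the sampling structure. I would instead invoke a non-uniform, Lata\l{}a/Seginer-type inequality
\[
\mathbb{E}\|\Sigma\|_{\mathrm{op}} \;\lesssim\; \max_i\Big(\textstyle\sum_j \mathbb{E}\Sigma_{ij}^2\Big)^{1/2} + \max_j\Big(\textstyle\sum_i \mathbb{E}\Sigma_{ij}^2\Big)^{1/2} + \Big(\textstyle\sum_{i,j}\mathbb{E}\Sigma_{ij}^4\Big)^{1/4},
\]
in which the two variance terms give $\sqrt m$ and $\sqrt n$ while the fourth-moment term, dominated by the $S$ heavily weighted positive entries, contributes $O(\sqrt[4]{S})$; substituting $\mathbb{E}\|\Sigma\|_{\mathrm{op}}=O(\sqrt n+\sqrt m+\sqrt[4]{S})$ yields the first term and, combined with the McDiarmid step, closes the proof. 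The delicate bookkeeping here is twofold: justifying that the fourth-moment sum ranges over exactly the $S$ positive locations (so the heavy mass does not spread over all $mn$ entries), and verifying uniformly over $\mathcal{X}$ that the boundedness and Lipschitz constants of $\ell^{I}$ and $\ell^{II}$ carry the stated single factor of $\alpha$ rather than a higher power.
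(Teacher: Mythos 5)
Your proposal is correct and follows essentially the same route as the paper's own proof: McDiarmid's bounded-difference inequality yields the $\alpha\sqrt{\log(2/\delta)}/\sqrt{mn}$ concentration term, symmetrization plus the Lipschitz-$\alpha$ contraction and the nuclear/spectral duality $\sup_{\|X\|_*\le\epsilon}\langle\sigma,X\rangle \le \epsilon\|\sigma\|_2$ reduce the mean to $\mathbb{E}_\sigma\|\sigma\|_2$, and Lata\l{}a's moment inequality then delivers $C(\sqrt{n}+\sqrt{m}+\sqrt[4]{S})$. The ``delicate bookkeeping'' you flag at the end---tracing the fourth-moment sum to the $S$ positive locations---is in fact the one step the paper itself treats tersely (it splits off a $\sup_X\sum_{ij,A_{ij}=1}\sigma_{ij}$ term over the positive support before invoking Lata\l{}a), so your instinct about where the argument needs care matches the weakest point of the published proof.
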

\noindent

\textbf{Remark:}
Given a U-I matrix $A\in\mathbb{R}^{n\times m}$, the generalization bound of CSRR is of the order to $O(\frac{\alpha}{\sqrt{mn}})$. This shows that even if a small ratio of 1's is observed in the matrix, we can achieve a lower upper-bound when $\sqrt{mn}$ becomes large enough.

Now we evaluate CSRR with the thresholded 0-1 matrix, on which the objective is defined as,
\bqs\label{cost_error}
\mathcal{L}_{\alpha,q}(X) = \alpha\sum_{A_{ij}=+1}I_{(X_{ij} \leq q)} + \sum_{A_{ij}=0}I_{(X_{ij} > q)},
\eqs
The following lemma shows that the expectation of $\mathcal{L}_{\alpha,q}(X)$ and $\mathbb{E}[R_{l^*}(X)]$ can be related by a linear transformation:
\begin{lemma}\label{cost_transformation_lamma}
Given $\min(\frac{1}{q^2}, \frac{1}{(1-q)^2}) \leq \gamma \leq \max(\frac{1}{q^2}, \frac{1}{(1-q)^2})$, the following inequality holds for any matrix $X\in\mathcal{X}$,
\bqs\notag 
\mathbb{E}[\mathcal{L}_{\alpha,q}(X)] - \min_{X\in\mathcal{X}} \mathcal{L}_{\alpha,q}(X) \leq \gamma \left(\mathbb{E}[R_{l^*}(X)] - \min_{X\in\mathcal{X}}\hat{R}_{l^*}(X)\right).
\eqs
\end{lemma}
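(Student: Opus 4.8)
The plan is to read this as a surrogate-calibration result: I would show that the thresholded cost $\L_{\alpha,q}$ is dominated, entry by entry, by a fixed multiple of the convex surrogate $\ell^{*}$, and that this domination survives the subtraction of the optimal terms. First I would fix an admissible entry value $x = X_{ij}\in[0,1]$ and treat the positive and negative contributions separately. Since the positive terms of both $\L_{\alpha,q}$ and $\ell^{I}$ carry the same factor $\alpha$, the weight cancels and it suffices to compare indicators with squared errors: for a positive entry the indicator $I_{(x\le q)}$ is nonzero only when $x\le q$, where $(x-1)^{2}\ge(1-q)^{2}$, giving $I_{(x\le q)}\le \frac{1}{(1-q)^{2}}(x-1)^{2}$; for a negative entry $I_{(x>q)}$ is nonzero only when $x>q$, where $x^{2}>q^{2}$, giving $I_{(x>q)}\le\frac{1}{q^{2}}x^{2}$. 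Up to the constant in the definition $\ell(x,a)=\frac12(x-a)^2$, these are exactly bounds of the cost integrand by a multiple of the surrogate integrand, and the two proportionality constants $\frac{1}{(1-q)^{2}}$ and $\frac{1}{q^{2}}$ are what bracket $\gamma$ in the statement.

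Summing the two inequalities over all $(i,j)$ and taking the expectation over the random sampling of $\Omega$ would then give the ``upper half'' of the claim, namely $\E[\L_{\alpha,q}(X)]\le\gamma\,\E[R_{l^{*}}(X)]$ for every $X\in\mathcal{X}$, with both sides normalised identically so that no factor of $mn$ leaks in. The same entrywise argument applies verbatim to $\ell^{II}$, only with the class target $\alpha$ in place of $1$, so I would carry the two loss variants in parallel.

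The remaining and more delicate step is to subtract the optimal terms, i.e. to establish $\min_{X\in\mathcal{X}}\L_{\alpha,q}(X)\ge\gamma\,\min_{X\in\mathcal{X}}\hat R_{l^{*}}(X)$, so that combining it with the upper half yields the stated excess-risk inequality. Here I would argue that the surrogate optimum drives its entries toward the class targets $\{0,1\}$, precisely where the thresholded cost is also minimised, and then invoke the pointwise comparison at the two respective optimisers to close the gap between the minima.

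The hard part will be exactly this reconciliation of the minimum terms, because it must run in the opposite direction to the pointwise domination: the squared surrogate stays strictly positive on every correctly-classified entry (those already on the right side of $q$), so no global reverse inequality $\L_{\alpha,q}\ge\gamma\hat R_{l^{*}}$ can hold, and the naive entrywise bound is useless there. The way through is to lean on the coupling constraint $\|X\|_{*}\le\epsilon$: it is what stops either objective from being pushed to zero and forces both optimisers to share the same approximation-limited low-rank structure, so that the comparison only needs to hold after the minima are removed. A secondary subtlety is collapsing the two distinct per-class constants $\frac{1}{(1-q)^{2}}$ and $\frac{1}{q^{2}}$ into a single $\gamma$, which is why the statement confines $\gamma$ to the interval between their minimum and maximum. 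Once both are settled, chaining this lemma with the surrogate bound of Theorem~\ref{expectlossbound_thm} immediately converts the guarantee into the advertised $O(\alpha/\sqrt{mn})$ control on the cost-sensitive error.
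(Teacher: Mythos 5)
Your first half is exactly the paper's: the pointwise calibration $I_{(x\le q)}\le \frac{1}{(1-q)^2}(x-1)^2$ on positive entries and $I_{(x>q)}\le \frac{1}{q^2}x^2$ on negative ones, summed and averaged over the sampling, is precisely how the paper obtains $\mathbb{E}[\mathcal{L}_{\alpha,q}(X)]\le\gamma\,\mathbb{E}[R_{l^*}(X)]$, with the same two constants bracketing $\gamma$. The genuine gap is in the step you yourself flag as hard: establishing $\min_{X\in\mathcal{X}}\mathcal{L}_{\alpha,q}(X)\ge\gamma\min_{X\in\mathcal{X}}\hat{R}_{l^*}(X)$. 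Neither of your two suggested routes works. The heuristic that ``the surrogate optimum drives its entries toward the class targets $\{0,1\}$'' is self-defeating: if the surrogate minimizer actually sat at the targets, then $\min\hat{R}_{l^*}$ would be (near) zero and the inequality would be trivial, so precisely in the regime where the lemma has content the two minimizers are different matrices sitting away from the targets. And the nuclear-norm constraint $\|X\|_*\le\epsilon$ cannot rescue this: it plays no role whatsoever in the paper's proof of this lemma, and by itself it provides no quantitative coupling between the minimizer of the thresholded cost and the minimizer of the squared surrogate — it only says both live in the same feasible set.

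What the paper actually does to reconcile the minima is an explicit conditional computation at the threshold, and this is the idea your proposal is missing. Condition on each error event: on an entry that is a false negative ($X_{ij}\le q$, $A_{ij}=1$), the smallest surrogate value attainable subject to being on the wrong side of the threshold is achieved at $X_{ij}=q$, giving $\rho\alpha(1-q)^2 = \frac{\rho\alpha}{\gamma}I_{(X_{ij}\le q)}$ with $\gamma=\frac{1}{(1-q)^2}$; on a false positive ($X_{ij}>q$, $A_{ij}=0$) it is $(1-\rho)q^2=\frac{1-\rho}{\gamma}I_{(X_{ij}>q)}$ with $\gamma=\frac{1}{q^2}$. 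In other words, the constrained surrogate minimum on each active error entry matches the corresponding term of the expected cost with exactly the same per-class constant as in your upper half, so the minima subtract off with a single $\gamma$ lying between $\min\bigl(\frac{1}{q^2},\frac{1}{(1-q)^2}\bigr)$ and $\max\bigl(\frac{1}{q^2},\frac{1}{(1-q)^2}\bigr)$. (The paper's own write-up of this step is admittedly informal — it conflates $\hat{R}_{l^*}$ with its expectation over the sampling rate $\rho$ and drops the $\frac12$ from $\ell(x,a)=\frac12(x-a)^2$ — but the threshold-attainment argument is the mechanism, and without it, or something equivalent, your proposal does not close.)
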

\begin{proof}
Consider the following two cases: $A_{ij} = 0$ with a probability $1$ if $Y_{ij} = 0$, then
\bqs\notag
& \mathbb{E}[\mathcal{L}_{\alpha,q}(X_{ij})] =  I_{(X_{ij} > q)}, \quad \min_{X\in\mathcal{X}} \mathcal{L}_{\alpha,q}(X_{ij}) = 0; \\
& \mathbb{E}[R_{l^*}(X_{ij})] = X_{ij}^2, \quad \min_{X\in\mathcal{X}}\hat{R}_{l^*}(X_{ij}) = 0.
\eqs

If $X_{ij} \leq q$, it holds with $\gamma X_{ij}^2 \geq 0$; otherwise, $X_{ij}^2 > q^2$ leads to $\gamma X_{ij}^2 > I_{(X_{ij} > q)}$ with $\gamma = 1/q^2$. For the second case if $Y_{ij} = 1$ with the sampling rate $\rho = |\Omega|/S$,
\bqs\notag
 \mathbb{E}[\mathcal{L}_{\alpha,q}(X_{ij})] = \rho\alpha I_{(X_{ij} \leq q)} + (1-\rho) I_{(X_{ij} > q)};
\eqs
\bqs\notag
 \mathbb{E}[R_{l^*}(X_{ij})] = \rho\alpha(X_{ij} - 1)^2 + (1-\rho) X_{ij}^2.
\eqs

If $X_{ij} \leq q$, $(X_{ij} - 1)^2 \geq (q - 1)^2$ leads to $\gamma(X_{ij} - 1)^2 \geq I_{(X_{ij} \leq q)}$ with $\gamma = 1/(1-q)^2$, while $\gamma X_{ij}^2 \geq I_{(X_{ij} > q)} = 0$.

If $X_{ij} > q$, $X_{ij}^2 > q^2$ leads to $\gamma X_{ij}^2 > I_{(X_{ij} > q)}$ with $\gamma = 1/q^2$ while $\gamma (X_{ij} - 1)^2 \geq I_{(X_{ij} \leq q)} = 0$.

Next we compute $\min_{X}\mathcal{L}_{\alpha,q}(X)$ and $\min_{X}\hat{R}_{l^*}(X)$.

If $X_{ij}\leq q$ and $A_{ij}=1$, $\min_{X}\hat{R}_{l^*}(X) = \rho\alpha(1-q)^2$. Thus, $ \min_{X}\hat{R}_{l^*}(X) = \frac{\rho\alpha}{\gamma} I_{(X_{ij}\leq q)}$ where $\gamma = \frac{1}{(1-q)^2}$.

If $X_{ij}>q$ and $A_{ij}=0$, $\min_{X}\hat{R}_{l^*}(X) = (1-\rho)q^2$. Thus $\min_{X}\hat{R}_{l^*}(X) = \frac{1-\rho}{\gamma}I_{(X_{ij} > q)}$ where $\gamma = \frac{1}{q^2}$.

\noindent
Since the errors consist of false positives and false negatives, $\gamma$ holds between $\min\{\frac{1}{q^2}, \frac{1}{(1-q)^2}\}$ and $\max\{\frac{1}{q^2}, \frac{1}{(1-q)^2}\}$.
Combining the above arguments, we conclude the proof.
\end{proof}

Therefore, by further relating $R_{l^*}(X)$ and $\mathcal{L}_{\alpha,q}(X)$ in Lemma \ref{cost_transformation_lamma} with an appropriate value $\alpha$, the following theorem gives us a bound of weighted $sum$.

\begin{thm}\label{sum_theorem}
By setting $\alpha = \frac{\mu_pT_n}{\mu_nT_p}$, $\min(\frac{1}{q^2}, \frac{1}{(1-q)^2}) \leq\gamma\leq \max(\frac{1}{q^2}, \frac{1}{(1-q)^2})$, with probability at least $1 - \delta$, $sum = \mu_p\times recall + \mu_n\times specificity$ is bounded,
\bqs\notag
 \mathbb{E}[sum] & \geq 1 - \min_{X\in\mathcal{X}} \frac{\mu_n}{T_n}\mathcal{L}_{\alpha,q}(X) \\
& - \gamma\alpha \frac{\mu_n}{T_n}\left(\frac{C\epsilon(\sqrt{n} + \sqrt{m} + \sqrt[4]{S})}{mn} + \frac{\sqrt{\log(2/\delta)}}{\sqrt{mn}}\right).
\eqs
\end{thm}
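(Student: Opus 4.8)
The plan is to chain together the three results already established earlier in the excerpt: the exact identity buried in the proof of Lemma~\ref{cost_sensitive_lemma}, the loss-transformation inequality of Lemma~\ref{cost_transformation_lamma}, and the generalization bound of Theorem~\ref{expectlossbound_thm}. Because the target is a \emph{lower} bound on $\mathbb{E}[sum]$, every intermediate step must be oriented to deliver \emph{upper} bounds on the relevant losses; keeping that orientation consistent is the one place where a sign slip could occur.

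First I would revisit the calculation inside the proof of Lemma~\ref{cost_sensitive_lemma}. With the prescribed choice $\alpha = \frac{\mu_p T_n}{\mu_n T_p}$, the bracketed quantity appearing there is exactly $\mathcal{L}_{\alpha,q}(X)$ as defined in (\ref{cost_error}), so that identity collapses to
\bqs\notag
sum = 1 - \frac{\mu_n}{T_n}\mathcal{L}_{\alpha,q}(X).
\eqs
Taking expectations gives $\mathbb{E}[sum] = 1 - \frac{\mu_n}{T_n}\mathbb{E}[\mathcal{L}_{\alpha,q}(X)]$, which converts the goal of lower-bounding $\mathbb{E}[sum]$ into that of upper-bounding $\mathbb{E}[\mathcal{L}_{\alpha,q}(X)]$.

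Next I would invoke Lemma~\ref{cost_transformation_lamma}, whose hypothesis on $\gamma$ is precisely the one assumed here, to write
\bqs\notag
\mathbb{E}[\mathcal{L}_{\alpha,q}(X)] \leq \min_{X\in\mathcal{X}}\mathcal{L}_{\alpha,q}(X) + \gamma\left(\mathbb{E}[R_{l^*}(X)] - \min_{X\in\mathcal{X}}\hat{R}_{l^*}(X)\right).
\eqs
The parenthesized generalization gap is exactly what Theorem~\ref{expectlossbound_thm} controls: with probability at least $1-\delta$ it is at most $\alpha\big(C\epsilon(\sqrt{n}+\sqrt{m}+\sqrt[4]{S})/(mn) + \sqrt{\log(2/\delta)}/\sqrt{mn}\big)$, where I factor the shared $\alpha$ out of the bound. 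Substituting this into the previous display and then back into $\mathbb{E}[sum] = 1 - \frac{\mu_n}{T_n}\mathbb{E}[\mathcal{L}_{\alpha,q}(X)]$, and distributing the nonnegative constant $\frac{\mu_n}{T_n}$ across both the $\min$ term and the $\gamma\alpha$ term, produces the claimed inequality.

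Since this is essentially a two-step substitution, I do not anticipate a serious obstacle; the delicate points are bookkeeping rather than analysis. Specifically, I would (i) confirm that the $\alpha$ appearing in Theorem~\ref{expectlossbound_thm} is the same bias $\alpha=\frac{\mu_p T_n}{\mu_n T_p}$ fixed in this statement, so it can be pulled out cleanly and matched with the $\gamma\alpha\frac{\mu_n}{T_n}$ prefactor in the conclusion, and (ii) verify that the high-probability event of Theorem~\ref{expectlossbound_thm} is the only source of randomness invoked, so that the final $1-\delta$ confidence is inherited without any union bound.
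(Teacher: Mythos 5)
Your proposal is correct and is exactly the argument the paper intends: the paper gives no separate proof of Theorem~\ref{sum_theorem}, instead asserting (in the sentence preceding it) that it follows by combining the identity $sum = 1 - \frac{\mu_n}{T_n}\mathcal{L}_{\alpha,q}(X)$ from the proof of Lemma~\ref{cost_sensitive_lemma} with Lemma~\ref{cost_transformation_lamma} and Theorem~\ref{expectlossbound_thm}, which is precisely your chain. Your bookkeeping checks (that $\alpha$ factors out of the bound in Theorem~\ref{expectlossbound_thm} and that the $1-\delta$ confidence is inherited from its single high-probability event) are also sound.
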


\textbf{Remark:}
The ratio of $T_n/T_p$ is unknown in Theorem \ref{sum_theorem}. To alleviate this issue, we use the weighted \emph{cost} as the evaluation metric and set $\alpha = \frac{C_p}{C_n}$, where $C_p$ and $C_n$ are the predefined cost parameters of false negatives and false positives, respectively. We assume that $C_p + C_n = 1$ and $0 \leq C_n \leq C_p$ since we prefer to improve the accuracy of the rarely observed classes. By this setting, the following theorem shows that the expected cost-sensitive weighted \emph{cost} decays as $O(\alpha/\sqrt{nm})$.

\begin{thm}\label{cost_theorem}
Under the same assumptions in Theorem \ref{sum_theorem}, by setting $\alpha=\frac{C_p}{C_n}$, with probability at least $1 - \delta$, $cost = c_p\times M_p + c_n\times M_n$ is bounded,
\bqs\notag
& \mathbb{E}[cost] - \min_{X\in\mathcal{X}} cost \\
\leq & c_n\gamma\alpha\left(\frac{C\epsilon(\sqrt{n} + \sqrt{m} + \sqrt[4]{S})}{mn} + \frac{\sqrt{\log(2/\delta)}}{\sqrt{mn}} \right),
\eqs
where $C$ is a constant.
\end{thm}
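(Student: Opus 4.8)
The plan is to prove Theorem \ref{cost_theorem} as a direct composition of the three results already established, exploiting the fact that, by construction, $cost$ is nothing more than a fixed positive scalar multiple of the thresholded objective $\mathcal{L}_{\alpha,q}$. First I would invoke the algebraic identity derived in the proof of Lemma \ref{cost_sensitive_lemma}: with the choice $\alpha = c_p/c_n$ one has
$$
cost = c_n\left[\alpha\sum_{A_{ij}=+1} I_{(X_{ij}\leq q)} + \sum_{A_{ij}=0} I_{(X_{ij} > q)}\right] = c_n\,\mathcal{L}_{\alpha,q}(X),
$$
so that $cost$ and $\mathcal{L}_{\alpha,q}$ coincide up to the factor $c_n$.

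The second step is to note that, since $c_n>0$ is a constant independent of $X$, it factors out of both the expectation and the minimization over $\mathcal{X}$, giving
$$
\mathbb{E}[cost] - \min_{X\in\mathcal{X}} cost = c_n\left(\mathbb{E}[\mathcal{L}_{\alpha,q}(X)] - \min_{X\in\mathcal{X}}\mathcal{L}_{\alpha,q}(X)\right).
$$
I would then chain in Lemma \ref{cost_transformation_lamma} to upper bound the parenthesized gap by $\gamma$ times the surrogate gap $\mathbb{E}[R_{l^*}(X)] - \min_{X\in\mathcal{X}}\hat{R}_{l^*}(X)$, and finally substitute the bound of Theorem \ref{expectlossbound_thm} for that surrogate gap. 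Pulling the shared $\alpha$ out of the two surrogate terms yields exactly the claimed right-hand side $c_n\gamma\alpha\left(\frac{C\epsilon(\sqrt{n}+\sqrt{m}+\sqrt[4]{S})}{mn} + \frac{\sqrt{\log(2/\delta)}}{\sqrt{mn}}\right)$, and the high-probability statement is inherited verbatim from Theorem \ref{expectlossbound_thm} since that is the only probabilistic ingredient in the chain.

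I do not expect a genuine obstacle here: the argument is essentially a one-line telescoping of the three prior results once the scaling identity is recognized. The only point that deserves explicit care is the interchange in the second step, namely verifying that $\min_{X} cost = c_n \min_{X} \mathcal{L}_{\alpha,q}(X)$. This relies on $c_n$ being strictly positive — guaranteed by the assumptions $0\leq c_n\leq c_p$ and $c_p+c_n=1$ — so that the positive rescaling preserves the minimizer and does not flip the inequality when composing with Lemma \ref{cost_transformation_lamma}. Everything else follows mechanically from the previously established lemmas and Theorem \ref{expectlossbound_thm}, and the $O(\alpha/\sqrt{mn})$ asymptotics are then immediate from the dominant second term.
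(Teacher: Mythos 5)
Your proposal is correct and follows exactly the route the paper intends (and leaves implicit): rewrite $cost = c_n\,\mathcal{L}_{\alpha,q}(X)$ via the identity from Lemma~\ref{cost_sensitive_lemma} with $\alpha = c_p/c_n$, pull the positive constant $c_n$ through the expectation and the minimum, apply Lemma~\ref{cost_transformation_lamma}, and substitute the high-probability bound of Theorem~\ref{expectlossbound_thm}. Your explicit check that $c_n > 0$ justifies the rescaling is a welcome detail the paper glosses over, but there is no substantive difference in approach.
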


\vspace{-0.002in}
\section{Experimental Results}
In this section we empirically evaluate the performance of our algorithm on three real-world datasets.
We start by introducing our experimental data and benchmark setup, followed by the discussion on the results.

\vspace{-0.002in}
\subsection{Experimental Settings}

\vspace{-0.002in}
\subsubsection{Datasets}

The experiments were conducted on three publicly available datasets:
the MovieLens-100K\footnote{http://grouplens.org/datasets/movielens/}, MovieLens-1M, and EachMovie\footnote{https://grouplens.org/datasets/eachmovie/}.
MovieLens-100K contains 100,000 ratings given by 943 users on 1,682 movies. MovieLens-1M contains 1,000,209 ratings spanning 6,039 users and 3,628 items.
EachMovie contains 2,811,983 ratings entered by 61,265 users on 1,623 movies.
The densities of the U-I matrices derived from the three datasets are $6.3 \times 10^{-2}$, $4.47 \times 10^{-2}$, and $1.91 \times 10^{-2}$ respectively. To simulate the binary feedback scenario on these datasets, we followed the setting of previous studies on implicit feedback~\cite{pan2013gbpr,rendle2009bpr} and treated the ratings larger than 3 as observed positive feedback.

\vspace{-0.002in}
\subsubsection{Evaluation Metrics}

Each dataset is randomly partitioned into two non-overlapping sets for training and testing. Given the observation matrix $A=[\a_1,\ldots,\a_m]$, for each user $\a_i$, $80\%$ of its historical items $\a_{i}^+$ are randomly chosen as the training data, and the remaining $20\%$ of $\a_{i}^+$ are used for testing. In order to fairly compare these algorithms, we randomly selected the training samples for each dataset, repeated the random partition for 5 times, and reported averaged results.

In the implicit feedback scenario, we followed the setting of~\cite{pan2013gbpr,liu2015boosting} by measuring the performance of different methods with four evaluation metrics: NDCG@N, P@N, R@N and F1-score@N, which are widely used in recommender systems~\cite{su2009survey}.
1) NDCG@N is the normalized discounted cumulative gain in the ranking. It reflects the usefulness or gain in the ranking list.
2) R@N is the recall metric in the top-N ranked list.
3) P@N is the precision metric in the top-N ranked list.
4) F1-score@N is the weighted harmonic mean of precision and recall.
In recommender systems, users are usually interested in a few top-N ranked items. In our work, we set N$= \{5,10,15\}$.
Specifically, the higher these measures, the better the performance of an algorithm is.
For each metric, we first computed the performance for each user on the testing data, then reported the averaged accuracy over all the users.

\subsubsection{Baselines}

We compared the proposed algorithms with four strong baseline algorithms: PopRank, WRMF, BPRMF, and MC-Shift.
1) PopRank is a naive baseline that recommends items to users purely based on the popularity of the items.
2) WRMF is a very strong matrix factorization model for item prediction~\cite{hu2008collaborative}. This method outperforms neighborhood-based models for item prediction on implicit feedback datasets.
3) BPRMF is a quite strong matrix factorization ranking model for item recommendation with implicit feedback~\cite{rendle2009bpr}. It is a pairwise learning-to-rank approach.
4) MC-Shift is the latest positive-unlabeled learning algorithm for matrix completion~\cite{hsieh2015pu}.

We adopted cross-validation to choose the parameters for all the evaluated algorithms.
In the MF-based methods (i.e. WRMF, BPRMF, CSRR-e), the number of the latent dimension was tuned from $\{10, 15, \ldots,50\}$.
For WRMF, we tuned $\lambda$ by grid search from $\{2^{-5},\ldots,2^0\}$ and set $C = 1$.
For BPRMF, the regularization parameters were tuned from $\{10^{-6},\ldots,10^0\}$ and the learning rate from $\{2^{-6},\ldots,2^2\}$.
For MC-Shift, we followed its setting~\cite{hsieh2015pu}, and tuned $\lambda$ and $\rho$ by a random validation set.
For CSRR, $\alpha = \frac{C_p}{C_n}$ was tuned with $C_p$ from $\{0.5,0.55,\ldots,0.95\}$ with a stepsize of $0.05$, while $\eta$ and $\lambda$ were tuned from $\{10^{-5},\ldots, 10^2\}$ on a heldout random sampling set.


\vspace{-0.01in}
\subsection{Comparison Results}

The performance of CSRR and other baselines is summarized in Table \ref{comparison_result}. We only adopt the CSRR-e on EachMovie due to a high runtime of CSRR-I/II. We make the following observations:
1) It can be seen that CSRR outperforms or at least compares favorably with all other baselines over all the datasets. In particular, the improvement over the MF-based methods (BPRMF and WRMF) are significant, e.g. at least $10\%$ improvement on NDCG@N of MovieLens-100K and $5\%$ on F-score@N of MovieLens-1M, for $ N\in\{5,10,15\}$.
2) Compared with cost-insensitive techniques, CSRR attains significant improvement on all three datasets. For instance, CSRR-I outperforms PopRank, BPRMF and MC-Shift by $84.5\%$, $17.2\%$ and $4.03\%$ on NDCG@5 of MovieLens-100K. These results show that the accuracy of item recommendation can be largely improved by jointly estimating outliers and cost-sensitive classification.

\begin{table*}[t]
\centering
\small
\linespread{0.8}
\caption{Four evaluation metrics over the MovieLens-100K, MovieLens-1M and EachMovie Datasets}
\label{comparison_result}
\begin{tabular}[2.1\textwidth] {|c|c|c|c|c|c|c|c|c|c|c|}
\hline
\multirow{2}{*}{Algorithm} & \multicolumn{10}{|c|}{\emph{MovieLens-100K}}  \\
\cline{2-11}
& R@5	& R@10   & P@5 & P@10 & F-score@5 & F-score@10 & F-score@15 &  NDCG@5 & NDCG@10 & NDCG@15 \\
\hline\hline
PopRank	& 0.0634 & 0.1192 & 0.1661 & 0.1569	& 0.0918  & 0.1355 & 0.1524 & 0.3935 & 0.4387  &  0.4507\\ \hline
BPRMF	& 0.1466 & 0.2325 & 0.3597 & 0.2965 & 0.2083  & 0.2606 & 0.2793 & 0.6297 & 0.6463  &  0.6463 \\ \hline
WRMF    & \textbf{0.1546} & \textbf{0.2411} & 0.3640  & 0.3008 & 0.2170 & 0.2677  & 0.2776 & 0.6525  & 0.6607  &  0.6598 \\ \hline
MC-Shift& 0.1301 & 0.2111 & 0.4522 & 0.3866	& 0.2021  & 0.2731 & 0.3035 & 0.7097 & 0.7146  &  0.7133 \\ \hline
CSRR-e& 0.1311 & 0.2026 & 0.4487 & 0.3700	& 0.2030  & 0.2618 & 0.2908 & 0.7093 & 0.7061  &  0.7317 \\ \hline
CSRR-II& 0.1401 & 0.2171 & 0.4716  &0.3927 &0.2160 & 0.2796 & 0.3100 & 0.7372  & 0.7380 & 0.7308 \\ \hline
CSRR-I & 0.1409 & 0.2173 & \textbf{0.4736} & \textbf{0.3943} & \textbf{0.2172} &  \textbf{0.2802} & \textbf{0.3114} & \textbf{0.7382}  & \textbf{0.7390} &  \textbf{0.7323}  \\ \hline
\hline
\multirow{2}{*}{Algorithm} & \multicolumn{10}{|c|}{\emph{MovieLens-1M}}  \\
\cline{2-11}
& R@5	& R@10   & P@5 & P@10 & F-score@5 & F-score@10 & F-score@15 & NDCG@5 & NDCG@10 & NDCG@15  \\
\hline\hline
PopRank		& 0.0421	& 0.0713		& 0.1879 	& 0.1647	&  0.0688 &  0.0995 & 0.1189 & 0.3763 & 0.4084  &  0.4241\\ \hline
BPRMF		& 0.0949	& 0.1575		& 0.3590 	& 0.3106	&  0.1501 &  0.2090 & 0.2364 & 0.6044 & 0.6221  &  0.6247\\ \hline
WRMF        & 0.1083	& 0.1759		& 0.3770 	& 0.3236	&  0.1683 &  0.2279 & 0.2535 & 0.6370 & 0.6532  &  0.6523\\ \hline
MC-Shift    & 0.1119	& 0.1774		& 0.3941 	& 0.3341	&  0.1743 &  0.2317 & 0.2564 & 0.6525 & 0.6650  &  0.6641\\ \hline
CSRR-e      & 0.1139	& 0.1808		& 0.3978 	& 0.3381	&  0.1771 &  0.2356 & 0.2661 & 0.6579 & 0.6697  &  0.6680\\ \hline
CSRR-II     & 0.1161    & \textbf{0.1861}  &0.3954	    & 0.3378    & 0.1795  &  0.2400 & 0.2663 & 0.6553 & 0.6683  &  0.6681\\ \hline
CSRR-I     & \textbf{0.1165}   & 0.1850  & \textbf{0.4007}	& \textbf{0.3400} & \textbf{0.1805} & \textbf{0.2400} & \textbf{0.2700} & \textbf{0.6591} & \textbf{0.6711} & \textbf{0.6716}\\ \hline
\hline
\multirow{2}{*}{Algorithm} & \multicolumn{10}{|c|}{\emph{EachMovie}}  \\
\cline{2-11}
& R@5	& R@10  & P@5 & P@10 & F-score@5 & F-score@10 & F-score@15  & NDCG@5 & NDCG@10 & NDCG@15 \\
\hline\hline
PopRank	& 0.178	         & 0.265	      & 0.1751 	        & 0.1433           & 0.1765 & 0.1860 & 0.1861 & 0.3949   & 0.4241 & 0.4371 \\ \hline
BPRMF	& 0.3251         & \textbf{0.4734}& 0.3028          & 0.2462           & 0.3135 & \textbf{0.3239} & \textbf{0.3067} & 0.6056	 & 0.6245 &  0.6311  \\ \hline
WRMF    & 0.3331         & 0.4703         & 0.3025          & 0.2424           & 0.3175 & 0.3199 & 0.2998 & 0.6085   & 0.6248  & \textbf{0.6316} \\ \hline
MC-Shift& 0.3213	     & 0.4562	      & 0.3056 	        & 0.2362	       & 0.3133    & 0.3112  & 0.2732 & 0.5985   & 0.6081  &  0.6086\\ \hline
CSRR-e & \textbf{0.3334} & 0.4620        & \textbf{0.3177} & \textbf{0.2462}  & \textbf{0.3254} & 0.3212 & 0.2949 & \textbf{0.6108}     & \textbf{0.6248}  &  0.6218\\ \hline
\end{tabular}
\end{table*}

\begin{figure*}
\centering
\caption{Evaluation of the proposed algorithms under varying weights of cost bias $\alpha$}
\label{Sensitivity-alpha}
\subfigure {\includegraphics[width=0.24\textwidth,height=3.65cm]{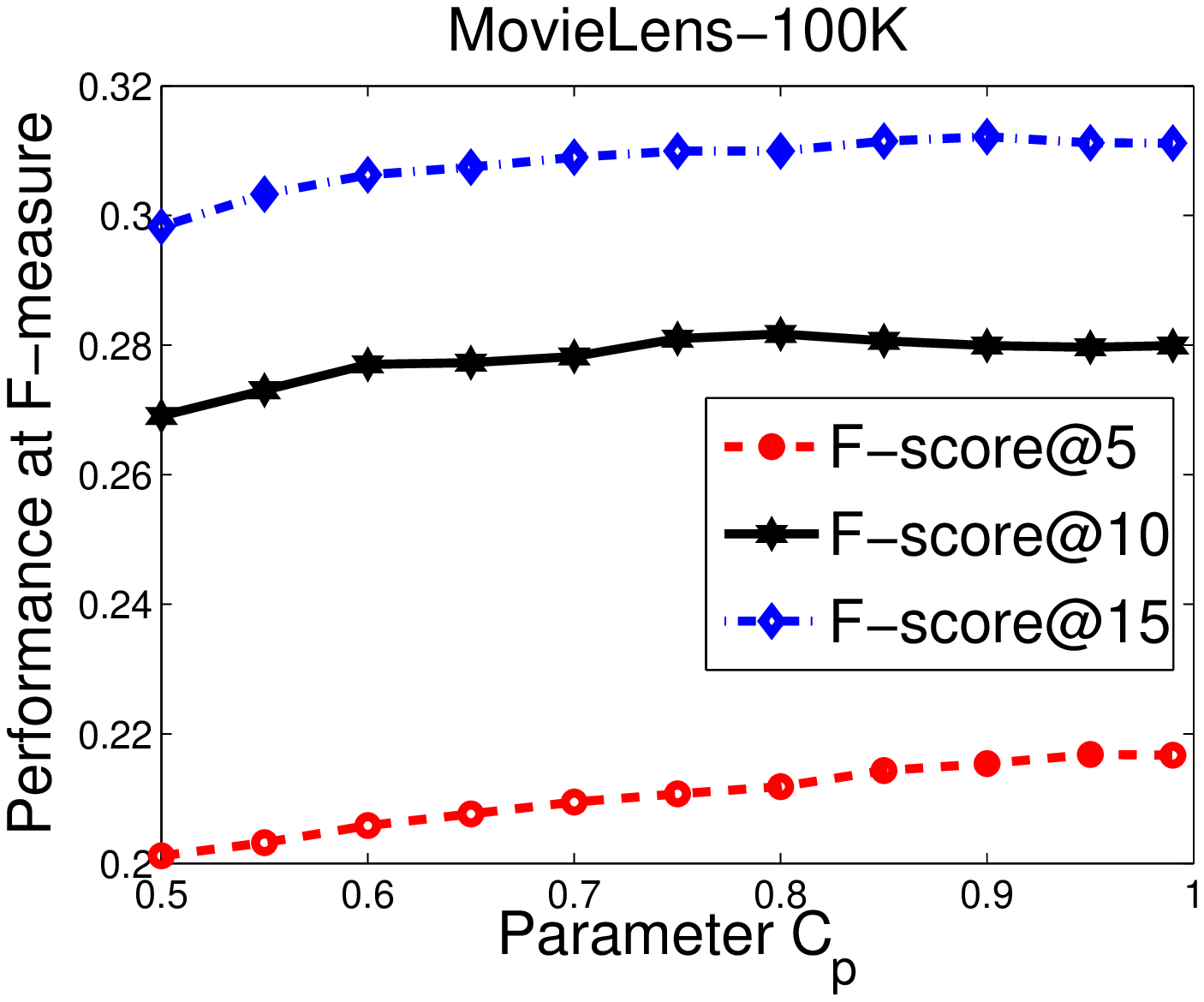}}
\subfigure {\includegraphics[width=0.24\textwidth,height=3.65cm]{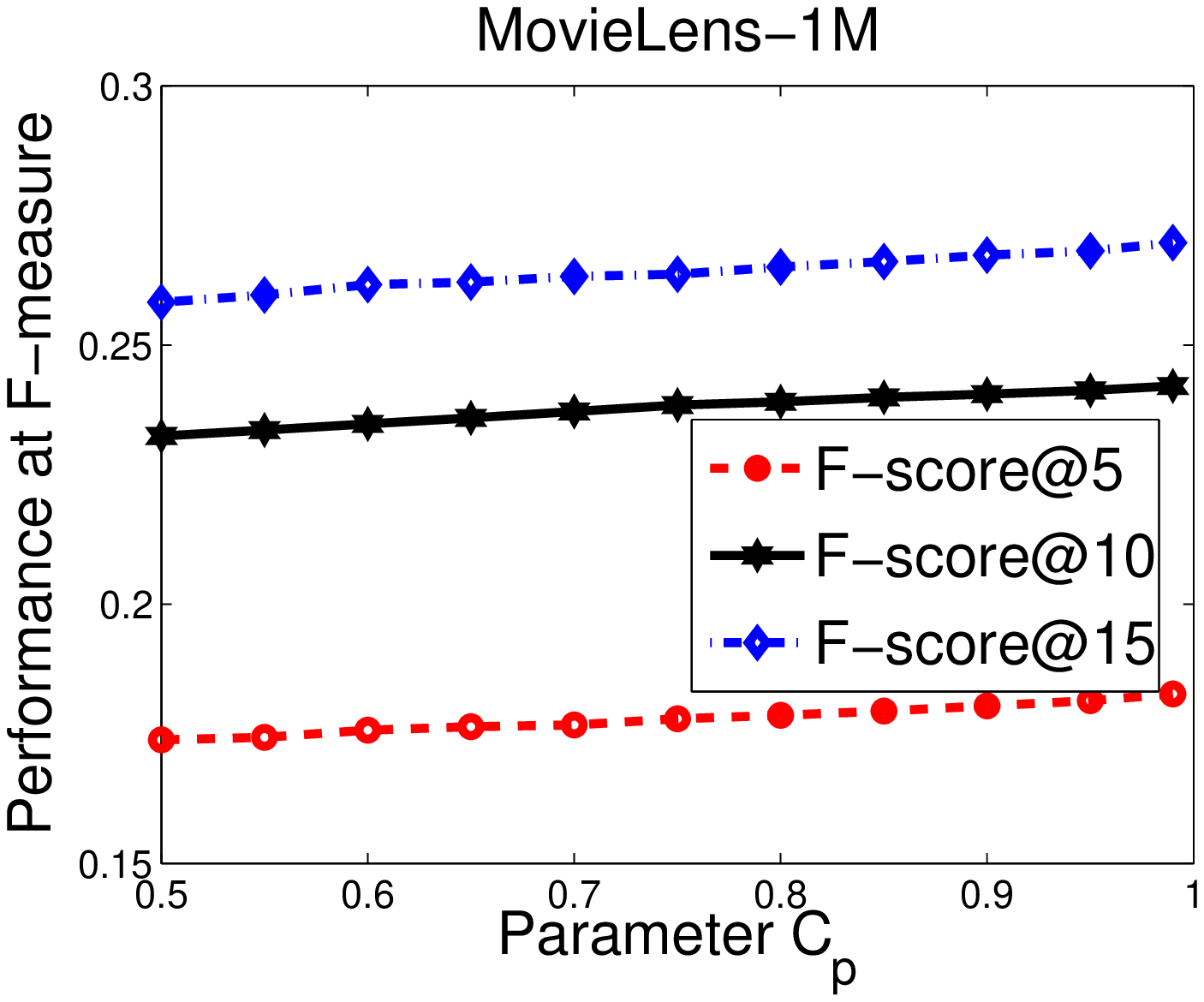}}
\subfigure {\includegraphics[width=0.24\textwidth,height=3.65cm]{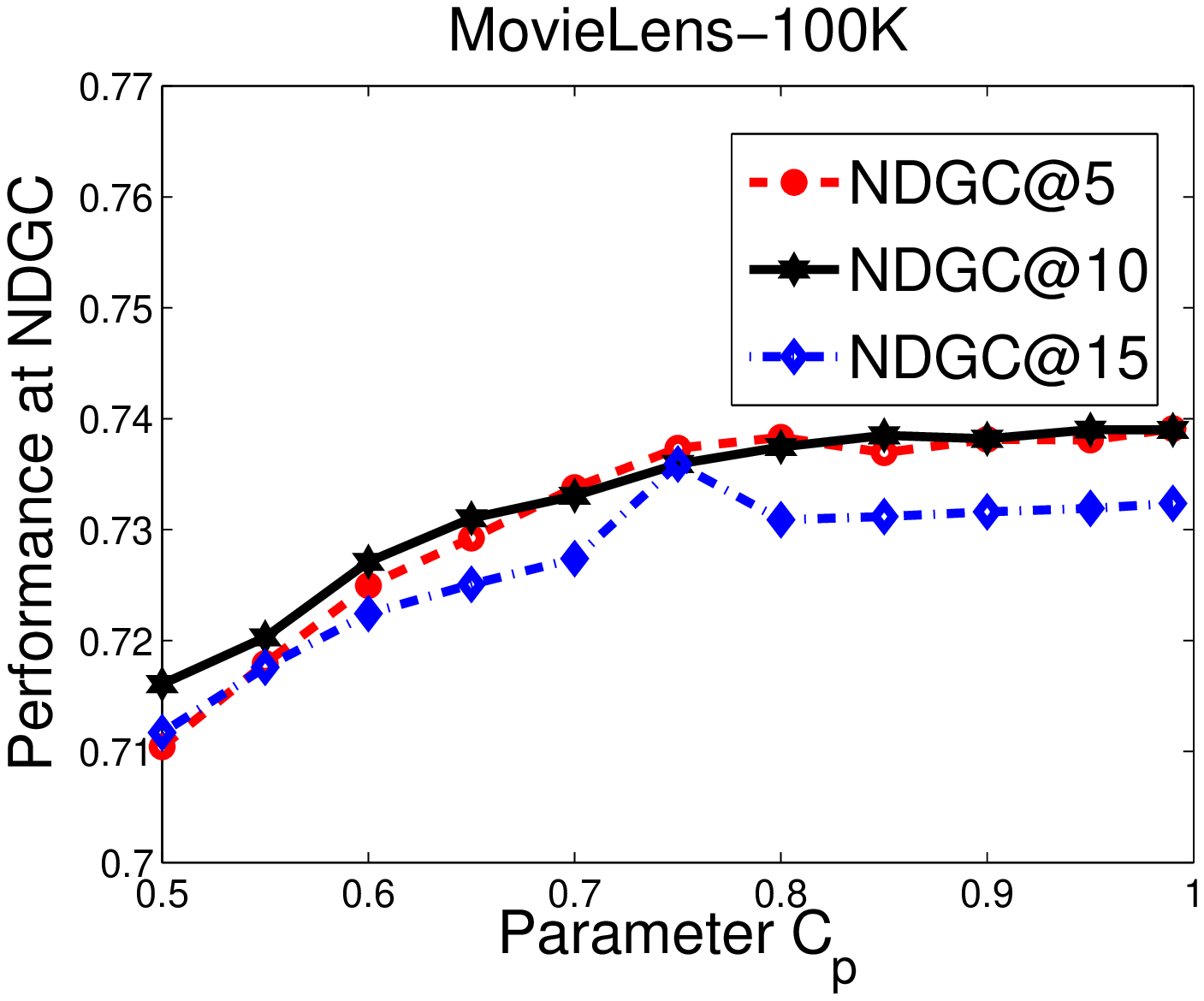}}
\subfigure {\includegraphics[width=0.24\textwidth,height=3.65cm]{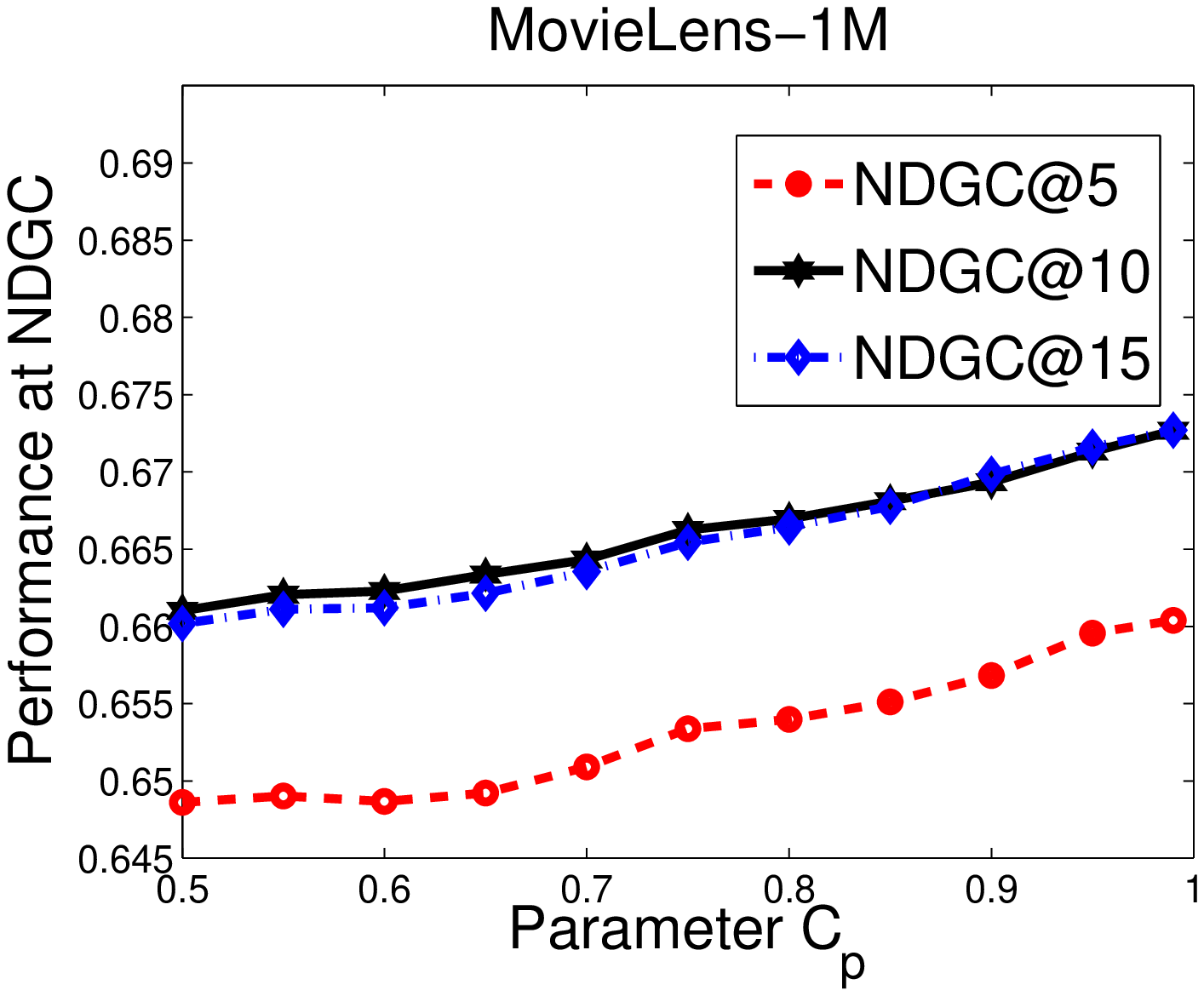}}
\caption{Evaluation of the proposed algorithms under varying weights of the learning rate $\eta$}
\label{Sensitivity-eta}
\subfigure {\includegraphics[width=0.24\textwidth,height=3.65cm]{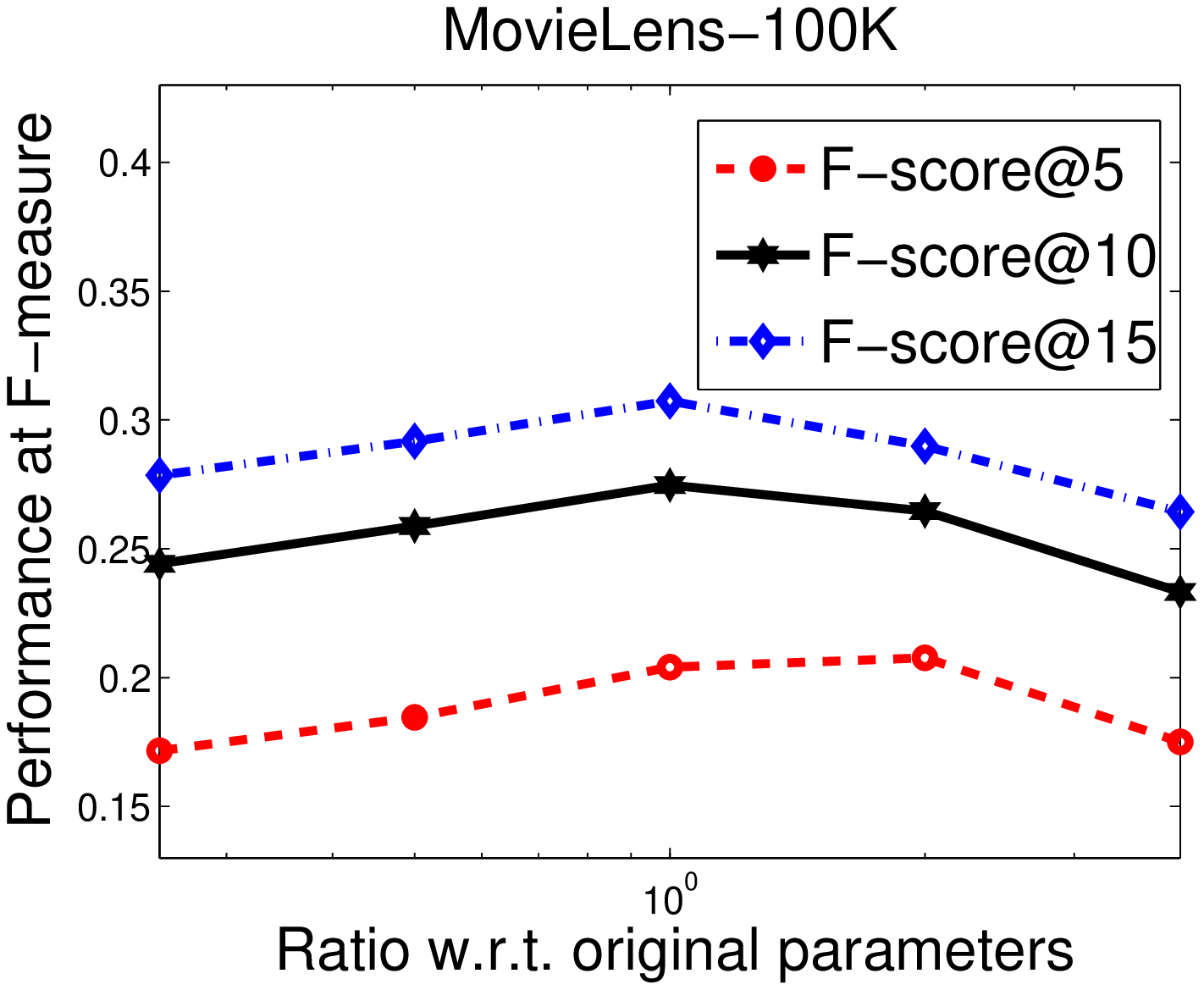}}
\subfigure {\includegraphics[width=0.24\textwidth,height=3.65cm]{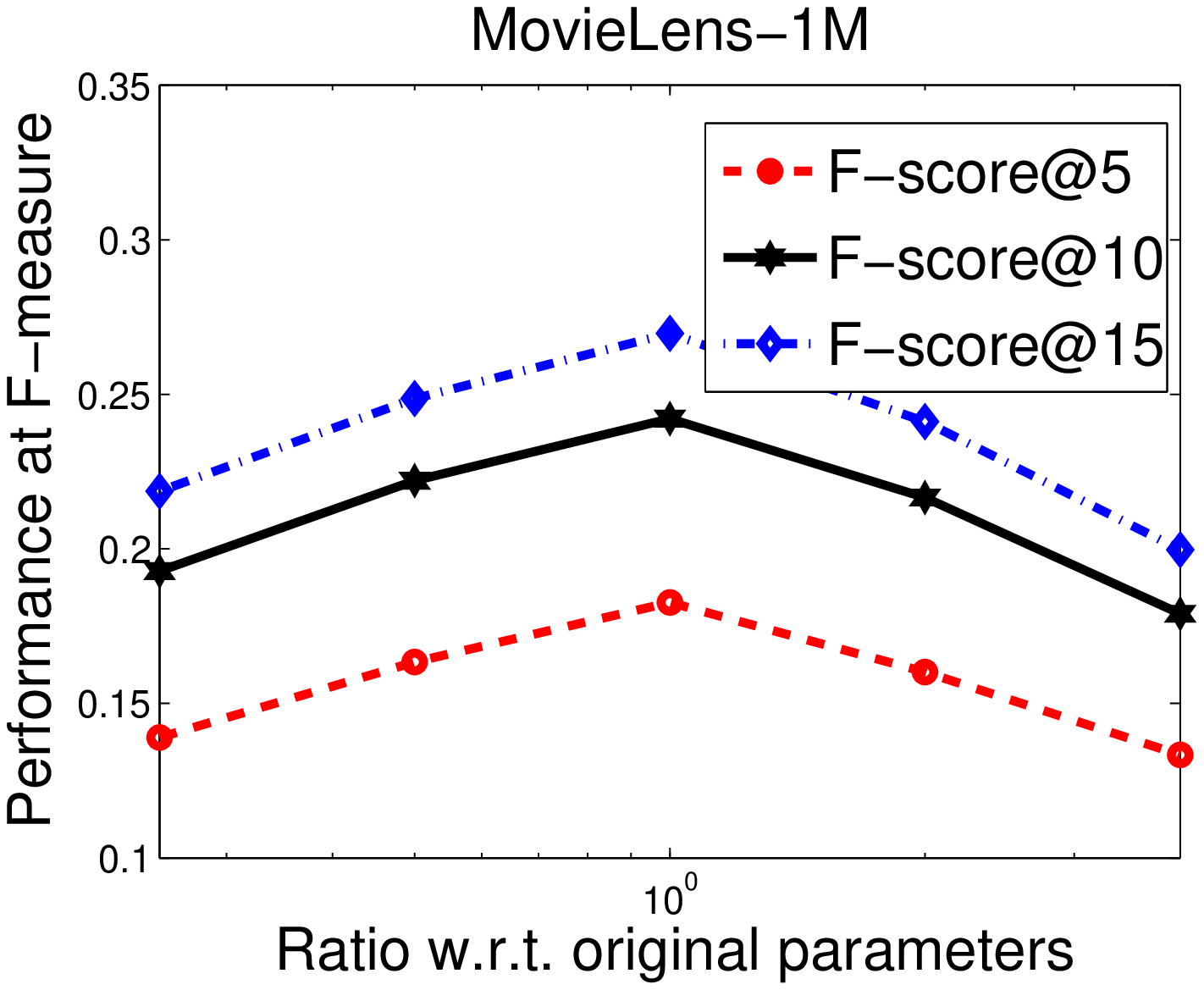}}
\subfigure {\includegraphics[width=0.24\textwidth,height=3.65cm]{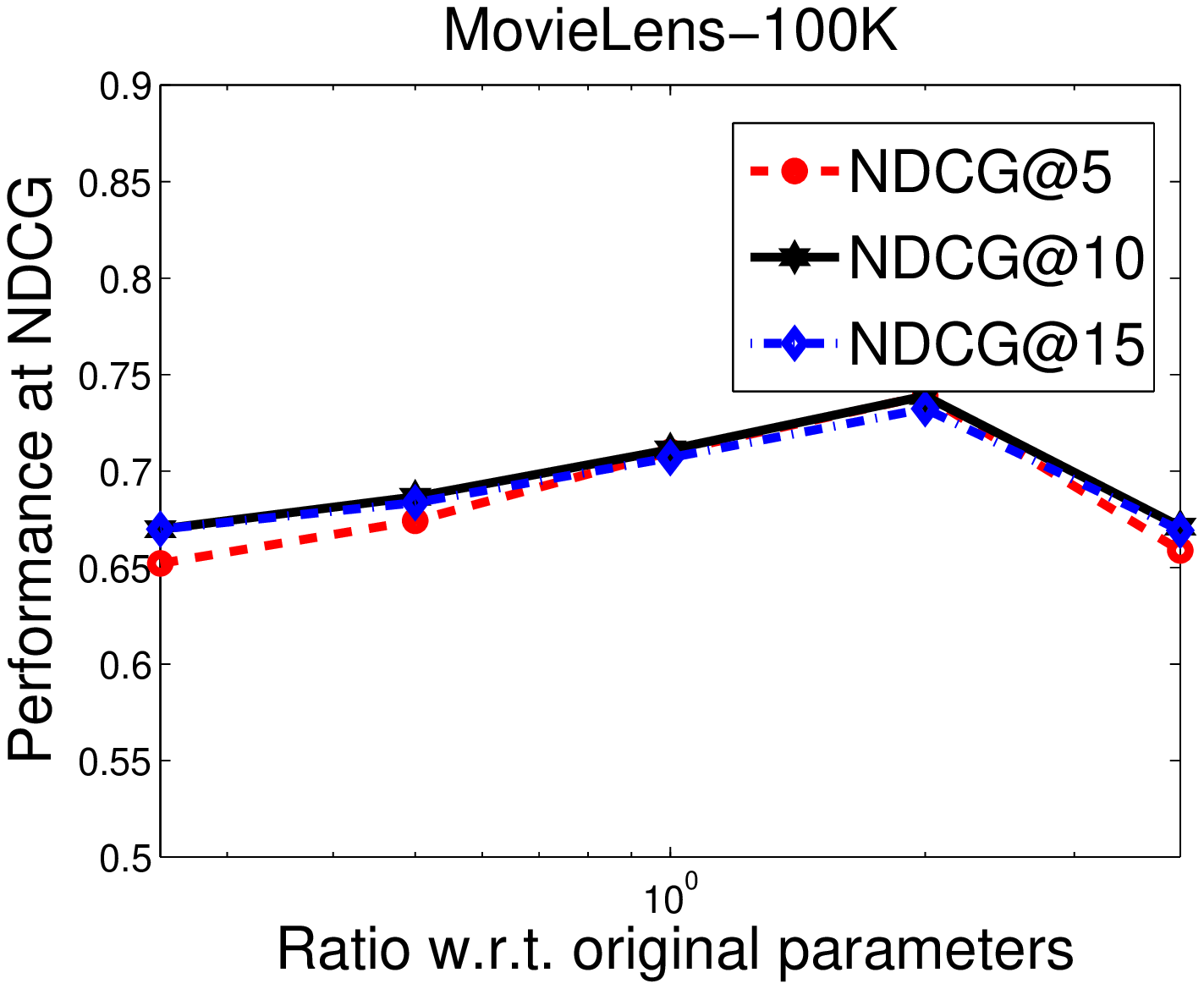}}
\subfigure {\includegraphics[width=0.24\textwidth,height=3.65cm]{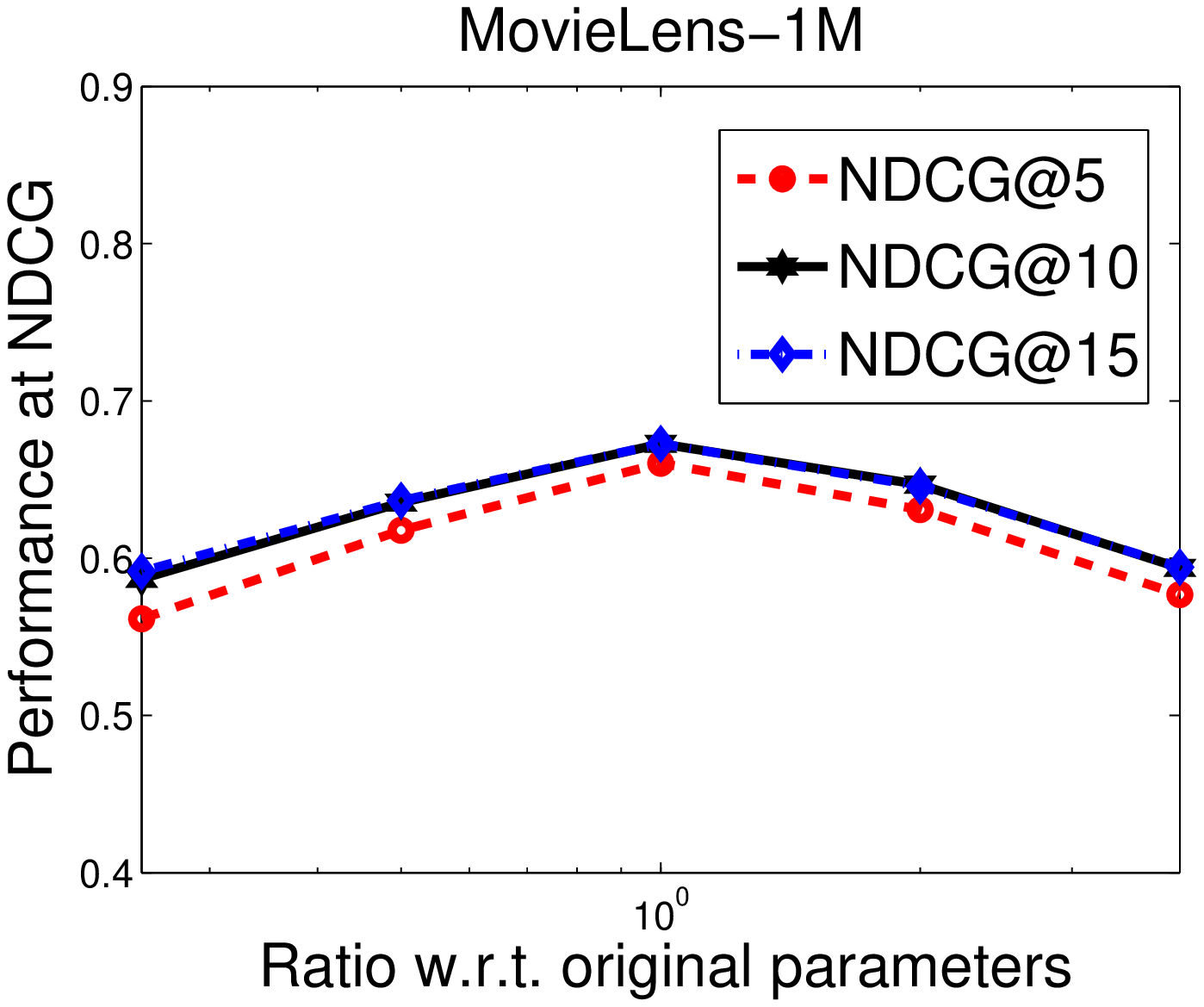}}
\caption{Evaluation of the proposed algorithms under varying weights of $\lambda_1$ and $\lambda_2$}
\label{Sensitivity-figure}
\subfigure {\includegraphics[width=0.24\textwidth,height=3.65cm]{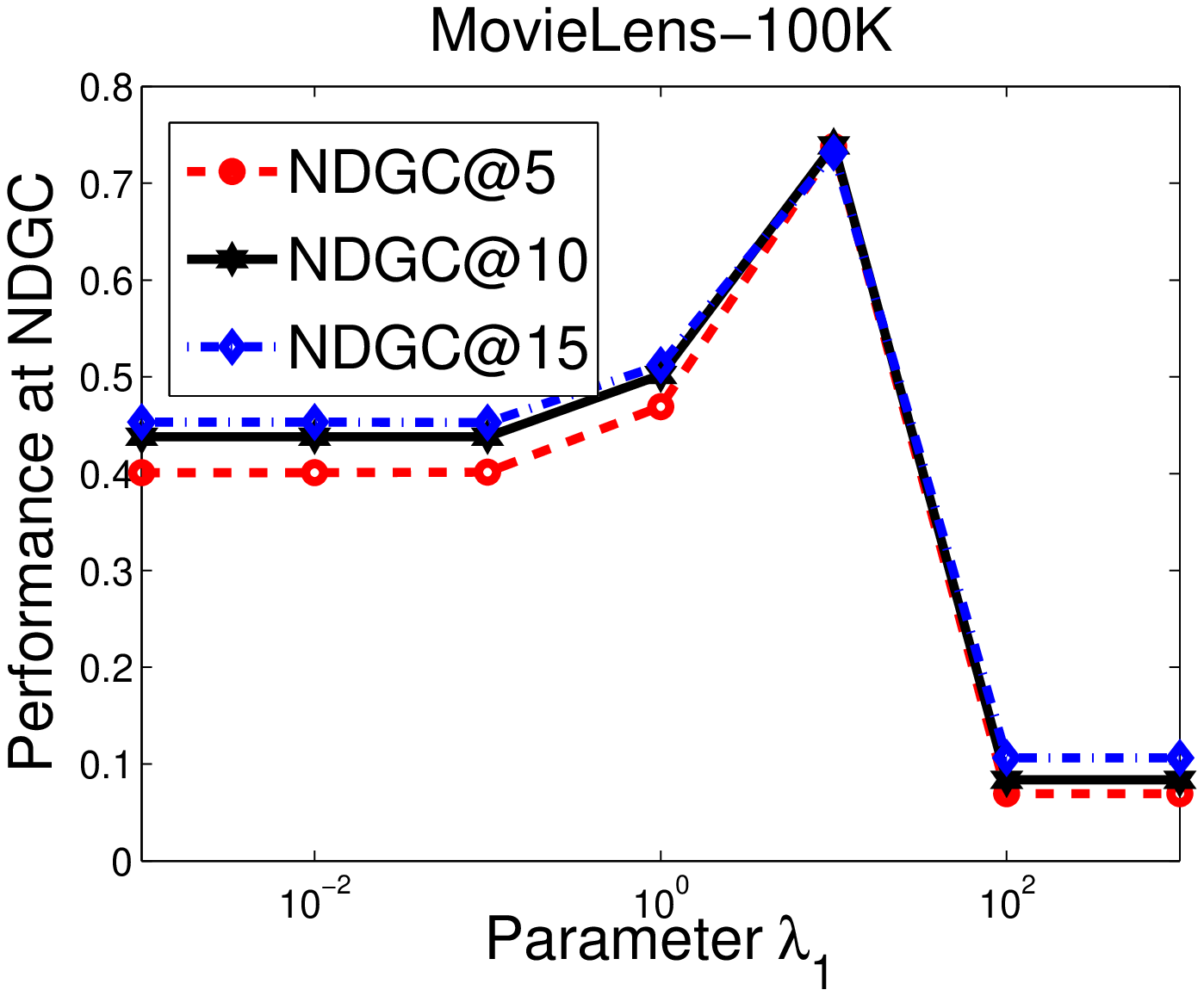}}
\subfigure {\includegraphics[width=0.24\textwidth,height=3.65cm]{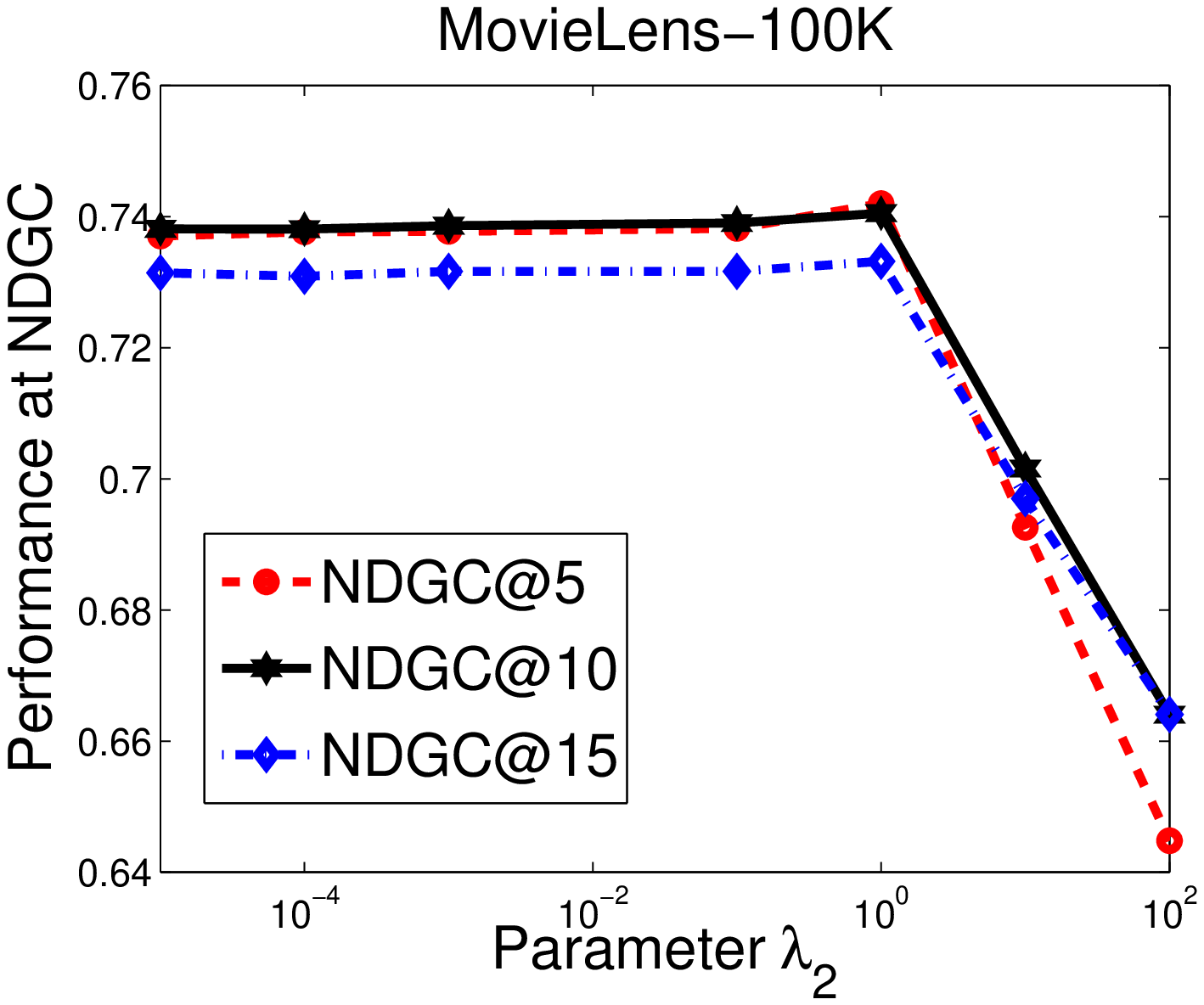}}
\subfigure {\includegraphics[width=0.24\textwidth,height=3.65cm]{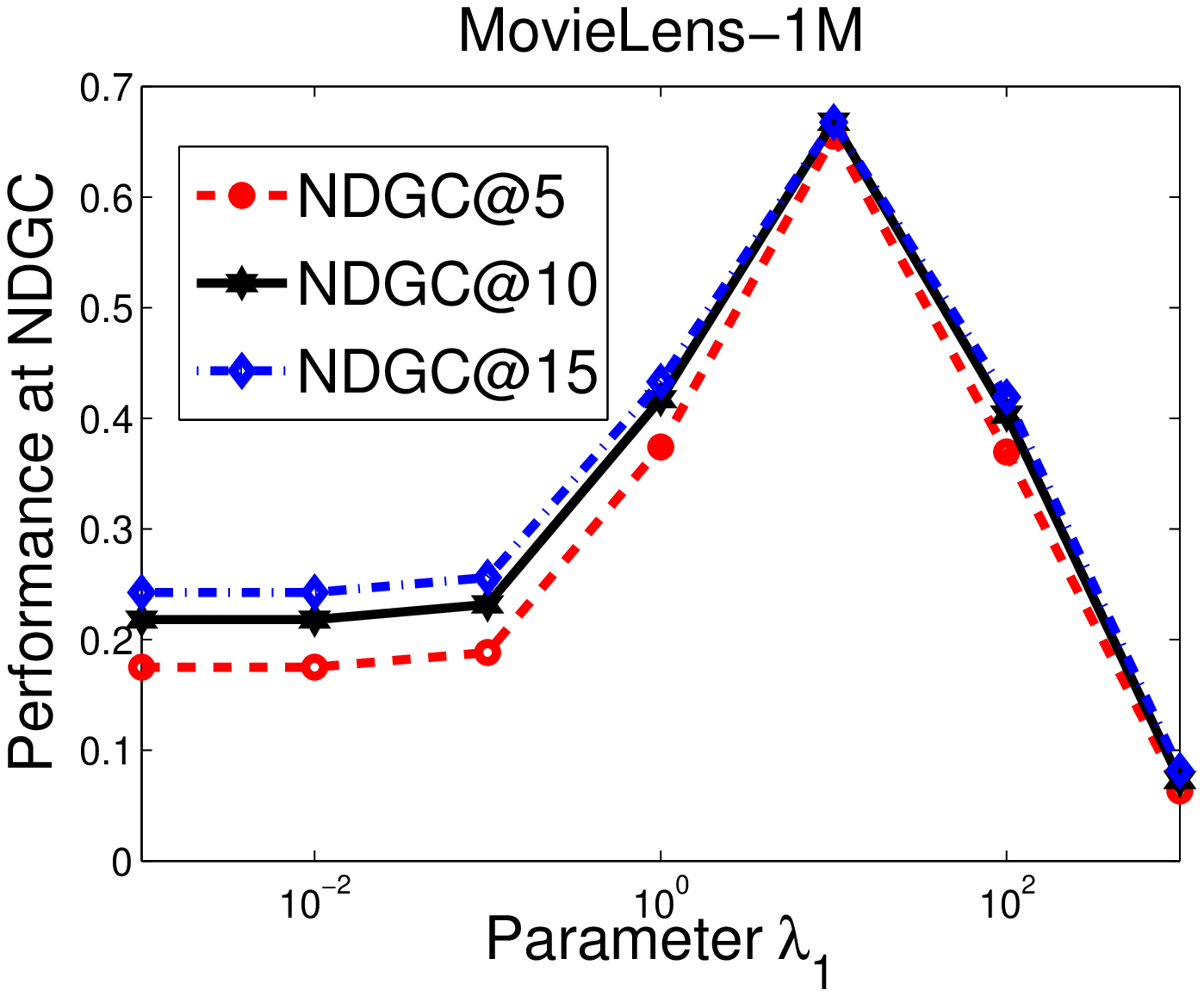}}
\subfigure {\includegraphics[width=0.24\textwidth,height=3.65cm]{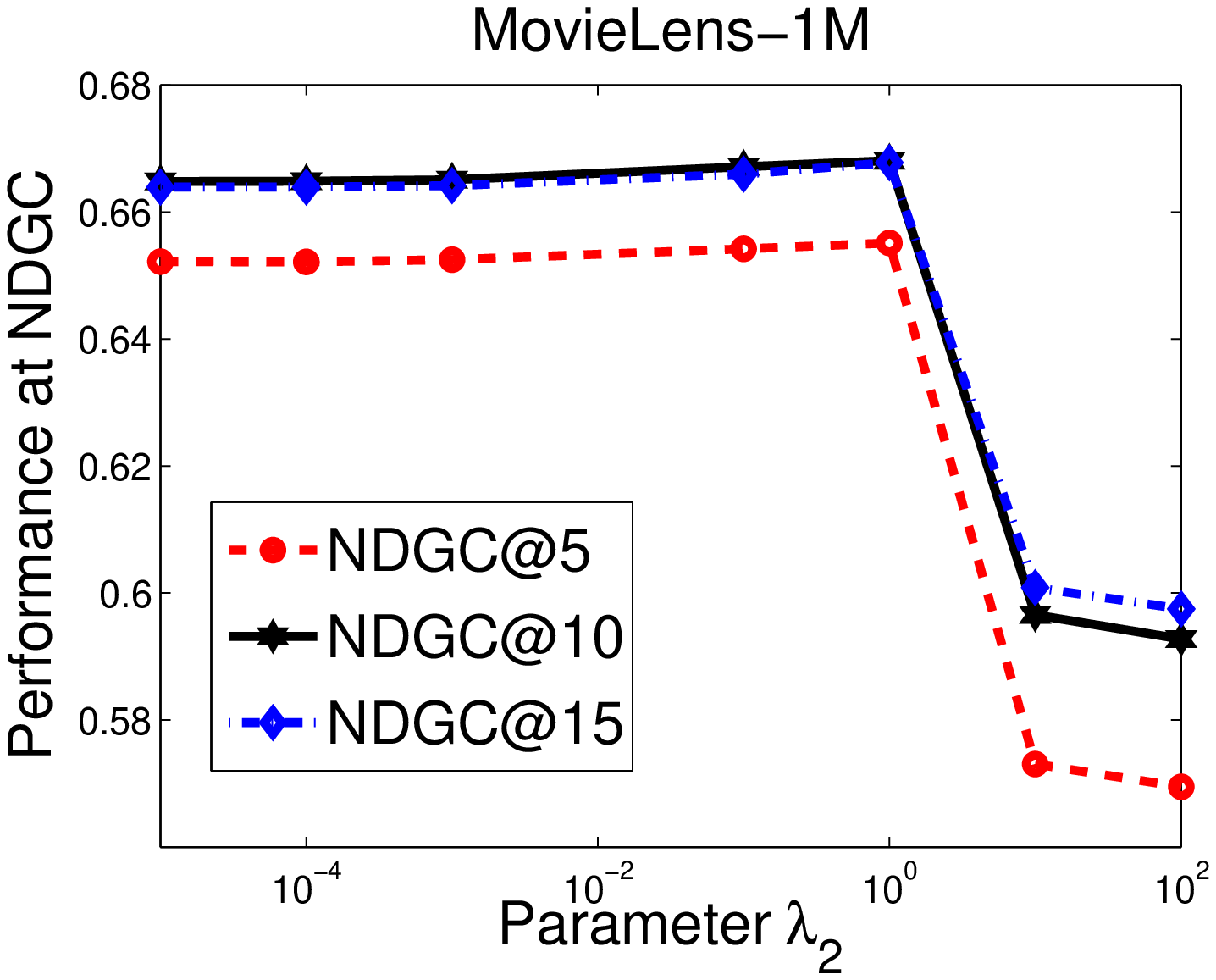}}
\end{figure*}

\vspace{-0.01in}
\subsection{Parameter Sensitivity Analysis}

We next conduct parameter sensitive study to evaluate our algorithms. We use the CSRR-I as a case study as similar observations are obtained on CSRR-II/e.

\subsubsection{Analysis for Cost-Sensitive Bias}

To demonstrate the efficiency of cost-sensitive learning, we aim to evaluate the algorithm under varying cost-sensitive bias $\alpha$. We set the parameter $\alpha$ by tuning $c_p$ within $\{0.5,\ldots,0.95\}$.
Fig. \ref{Sensitivity-alpha} shows the evaluation results.
With a small value of $c_p$, it is observed that the performance is degraded over all metrics by such cost-insensitive classification.
The possible reason is that the positive predictions are difficult, i.e. their latent features are similar to positive examples but they are actually negatives. There can be many such cases as the dataset is imbalanced. Updating with those samples will push the decision boundary to the side that includes fewer positive samples and consequentially harm performance. Thus, using cost-sensitive learning with a large value of $c_p$ will reduce the impact of such samples and lead to more ``aggressive" ($\ell^I$) or ``frequent" ($\ell^{II}$) update of the positive class, which is consistent with observations both in theory and in practice.

%

\subsubsection{Analysis for Learning Rate}

We also examined the parameter sensitivity of the learning rate parameter $\eta$. In particular, we set the learning rate as
a factor in $\{2^{-2}, 2^{-1},\ldots,2^2\}$ times the original learning rate used in the comparison results above, and reported the performance under the varied learning rate settings. Fig. \ref{Sensitivity-eta} shows the evaluation results. We observed that our algorithms perform quite well on a relatively large parameter space of the learning rate.

\subsubsection{Analysis for Robust Matrix Decomposition}

We studied the impact of the regularization parameter pair $(\lambda_1,\lambda_2)$ in our method. The algorithm with a large $\lambda_1/\lambda_2$ would prefer the common/personalized structure.
In particular, by fixing $\lambda_2 = 0.1$, and varying $\lambda_1$ in the range $[10^{-3},\ldots,10^3]$, or by fixing $\lambda_1 = 0.1$ and varying $\lambda_2$ in the range $[10^{-5},\ldots,100]$, we studied how the parameter $\lambda_1$ or $\lambda_2$ affects the performance.
The comparison results is present in Fig. \ref{Sensitivity-figure}. We observed that the proposed method does not perform well with either a large or a small value of $\lambda_1$. It indicates that increasing the impact of low-rank structure may hurt the observed matrix structure. Thus, better performance can be achieved by balancing the impact of the outliers and the low-rank latent structure.

In addition, the accuracy can be improved when $\lambda_2$ is increased, inferring that the learned $V$ is effective to reduce the gross outliers. Moreover, when we disable the outlier estimation, i.e. $V = 0$, the results in Table \ref{Anaysis_result} show that the performance without $V$ (CSRR-I/$V_0$) is obviously degraded, which infers that without $V$ to reduce the negative impact of outliers, a low-rank basis could be corrupted and prevented from reliable solution.

\section{Conclusion}

In this paper we proposed a robust cost-sensitive learning for recommendation with implicit feedback. It can estimate outliers in order to recover a reliable low-rank structure. To minimize the asymmetric cost of errors from imbalanced classes, a cost-sensitive learning is proposed to impose the different penalties in the objective for the observed and unobserved entries. To our knowledge, this is the first algorithm that combines cost-sensitive classification and an additive decomposition for recommendation. The derived non-smooth optimization problem can be efficiently solved by an accelerated projected gradient schema with a closed-form solution. We proved that CSRR can achieve a lower error rate when $\sqrt{mn}$ is large enough. Finally, the promising empirical results on the real-world applications of recommender systems validate the effectiveness of the proposed algorithm compared to other state-of-the-art baselines. In the future work, such robust cost-sensitive learning could be applied to online graph classification~\cite{yang2016efficient,yang2015min,yang2015aggressive} and disease-gene identification problem~\cite{yang2011inferring}.

\begin{table}[t]
\centering
\small
\linespread{0.8}
\caption{Analysis for Robust Matrix Decomposition}
\label{Anaysis_result}
\begin{tabular}[2.1\textwidth] {|c|c|c|c|c|}
\hline
\multirow{2}{*}{Algorithm} & \multicolumn{4}{|c|}{\emph{MovieLens-100K}}  \\
\cline{2-5}
& F-score@5 & F-score@15 &  NDCG@5 & NDCG@15 \\
\hline\hline
CSRR-I/$V_0$ &0.2131 & 0.3010 & 0.7304  & 0.7193 \\ \hline
CSRR-I &0.2172 & 0.3114 & 0.7382  & 0.7323  \\ \hline
\hline
\multirow{2}{*}{Algorithm} & \multicolumn{4}{|c|}{\emph{MovieLens-1M}}  \\
\cline{2-5}
& F-score@5  & F-score@15 & NDCG@5 & NDCG@15  \\
\hline\hline
CSRR-I/$V_0$ & 0.1726  & 0.2607 & 0.6363  &  0.6524\\ \hline
CSRR-I & 0.1805  & 0.2700 & 0.6591  & 0.6716\\ \hline
\end{tabular}
\end{table}

\section{Appendix}

\textbf{Proof of Lemma~\ref{optimation_UV}}

\begin{proof}
Given that $W = \begin{bmatrix} U \\ V \end{bmatrix}$, we obtain that
\bqs\notag 
& \frac{1}{2\eta}\sum_{i=1}^m \mathcal{D}(\w_i,\w^t_i) = \frac{1}{2\eta}\sum_{i=1}^m \|\w_i - \w^t_i\|_F^2 = \frac{1}{2\eta}\|W - W^t\|^2_F \\
   & \overset{(\ref{decompositionW})} {=} \frac{1}{2\eta}\left\| \begin{bmatrix} U - U^t \\ V - V^t \end{bmatrix} \right\|_F^2 = \frac{1}{2\eta}\sum_{i=1}^m(\|\u_i - \u_i^t\|^2_F + \|\v_i - \v_i^t\|^2_F).
\eqs
The linearized gradient form can be rewritten,
\bqs\notag 
 & \sum_{i=1}^m\langle \nabla \ell^*_t,\w_i - \w_i^t \rangle
  \overset{(\ref{decompositionW})} {=}  \sum_{i=1}^m \left\langle \begin{bmatrix} \partial\ell^*_t / \partial \u_i^t \\ \partial\ell^*_t / \partial \v_i^t \end{bmatrix} , \begin{bmatrix} \u_i - \u_i^t \\ \v_i - \v_i^t \end{bmatrix} \right\rangle \\
 = & \sum_{i=1}^m \left\langle \frac{\partial\ell^*_t}{\partial \u_i^t}, \u_i - \u_i^t \right\rangle + \sum_{i=1}^m\left\langle \frac{\partial\ell^*_t}{\partial \v_i^t}, \v_i - \v_i^t \right\rangle.
\eqs
Substituting above two equations into problem (\ref{linearization_projection}) with some manipulations, we have
\bqs\notag
 (U^{t+1},V^{t+1})
\triangleq & \underset{U,V}{\operatorname{argmin}} \frac{1}{2\eta} \left( \|U - \hat{U}^t\|^2_F  + \|V - \hat{V}^t\|^2_F \right) \\
 & + \lambda_1\|U\|_{*} + \lambda_2\|V\|_{1},
\eqs
where $\hat{U}^t = U^t - \eta\frac{\partial \ell^*_t}{\partial U^t}$, and $\hat{V}^t = V^t - \eta\frac{\partial \ell^*_t}{\partial V^t}$. Due to the decomposability of objective function above, the solution for $U$ and $V$ can be optimized separately, thus we can conclude the proof.
\end{proof}

\textbf{Proof of Theory \ref{expectlossbound_thm}}
\begin{proof}
Given that $X_{ij}\in[0,1]$ and $\alpha>1$, each $\ell^{I}(X_{ij},A_{ij})$ can be either $X_{ij}^2$ or $\alpha(X_{ij}-1)^2$, when changing one random variable $A_{ij}$, in the worst case the quantity $\ell^{I}(X_{ij},A_{ij})$ can be changed by
\bqs\notag
& |X_{ij}^2 - \alpha(X_{ij}-1)^2|  = |(\alpha-1)(X_{ij} - \frac{\alpha}{\alpha-1})^2 - \frac{\alpha}{\alpha-1}| \\
& \leq \max\left(\ell^{I}(1,A_{ij}),\ell^{I}(0,A_{ij})\right) = \alpha.
\eqs
$\ell^{II}(X_{ij},A_{ij})$ has a similar bound as above.
Motivated by Theorem 1 in~\cite{rosenberg2007rademacher}, using McDiarmid's Theorem~\cite{alpaydin2014introduction} with at least probability $1 - \delta/2$,
\bqs\notag
\mathbb{E}[R_{\ell^*}(X)] - R_{\ell^*}(X) \leq  \tilde{R}_{\ell^*}(X) + \frac{\alpha\sqrt{\log(2/\delta)}}{\sqrt{mn}},
\eqs
where $\tilde{R}_{\ell^*}(X) = \frac{2}{mn}\mathbb{E}_{\sigma}\left[\sup_X\left(\sum_{ij}\sigma_{ij}\ell^*(X_{ij},A_{ij})\right)\right]$
and $Pr(\sigma_{ij}=+1) = Pr(\sigma_{ij}=-1) = 0.5$. \\
Since $X_{ij}\in[0,1]$, and the Lipschitz constant for $\ell^*(X_{ij},A_{ij})$ is at most $\alpha$, i.e. $|\ell^*(X)|\leq\alpha|X| + |\ell^*(0)|$,
\bqs\notag
\tilde{R}_{\ell^*}(X)  \leq & \frac{2\alpha}{mn}\mathbb{E}_{\sigma}\left[\sup_X\sum_{ij}\sigma_{ij}X_{ij} + \sup_X\sum_{ij,A_{ij}=1}\sigma_{ij} \right] \\
    \leq & \frac{2\alpha}{mn}\mathbb{E}_{\sigma}\left[\sup_X\|\sigma\|_2\|X\|_*\right],
\eqs
where $\mathbb{E}_{\sigma}[\sum_{ij}\sigma_{ij}]=0$, similar in the proof of Theorem 3 in~\cite{hsieh2015pu}. Applying main Theorem in~\cite{latala2005some} to $\mathbb{E}[|\sigma\|_2]$ where $\sigma_{ij}$ is independent mean zero entry, we have
\bqs\notag
& \mathbb{E}[|\sigma\|_2] \\
\leq & C\left(\max_i\sqrt{\sum_{j}\mathbb{E}(X_{ij}^2)}+\max_j\sqrt{\sum_{i}\mathbb{E}(X_{ij}^2)}
 + \sqrt[4]{\sum_{ij}\mathbb{E}(X^4_{ij})}\right)\\
\leq & C\left(\sqrt{m} + \sqrt{n} + \sqrt[4]{S}\right),
\eqs
where $C$ is some universal constant and we conclude our proof.
\end{proof}

{
\bibliographystyle{IEEEtranS}
\bibliography{aaai2016}

\begin{thebibliography}{10}
\providecommand{\url}[1]{#1}
\csname url@samestyle\endcsname
\providecommand{\newblock}{\relax}
\providecommand{\bibinfo}[2]{#2}
\providecommand{\BIBentrySTDinterwordspacing}{\spaceskip=0pt\relax}
\providecommand{\BIBentryALTinterwordstretchfactor}{4}
\providecommand{\BIBentryALTinterwordspacing}{\spaceskip=\fontdimen2\font plus
\BIBentryALTinterwordstretchfactor\fontdimen3\font minus
  \fontdimen4\font\relax}
\providecommand{\BIBforeignlanguage}[2]{{%
\expandafter\ifx\csname l@#1\endcsname\relax
\typeout{** WARNING: IEEEtranS.bst: No hyphenation pattern has been}%
\typeout{** loaded for the language `#1'. Using the pattern for}%
\typeout{** the default language instead.}%
\else
\language=\csname l@#1\endcsname
\fi
#2}}
\providecommand{\BIBdecl}{\relax}
\BIBdecl

\bibitem{alpaydin2014introduction}
E.~Alpaydin, \emph{Introduction to machine learning}.\hskip 1em plus 0.5em
  minus 0.4em\relax MIT press, 2014.

\bibitem{amaldi1998approximability}
E.~Amaldi and V.~Kann, ``On the approximability of minimizing nonzero variables
  or unsatisfied relations in linear systems,'' \emph{Theoretical Computer
  Science}, vol. 209, no. 1-2, pp. 237--260, 1998.

\bibitem{babacan2012sparse}
S.~D. Babacan, M.~Luessi, R.~Molina, and A.~K. Katsaggelos, ``Sparse bayesian
  methods for low-rank matrix estimation,'' \emph{IEEE Transactions on Signal
  Processing}, vol.~60, no.~8, pp. 3964--3977, 2012.

\bibitem{bagirov2014introduction}
A.~Bagirov, N.~Karmitsa, and M.~M. M{\"a}kel{\"a}, \emph{Introduction to
  Nonsmooth Optimization: theory, practice and software}.\hskip 1em plus 0.5em
  minus 0.4em\relax Springer, 2014.

\bibitem{beck2009fast}
A.~Beck and M.~Teboulle, ``A fast iterative shrinkage-thresholding algorithm
  for linear inverse problems,'' \emph{SIAM journal on imaging sciences},
  vol.~2, no.~1, pp. 183--202, 2009.

\bibitem{boyd2004convex}
S.~Boyd and L.~Vandenberghe, \emph{Convex optimization}.\hskip 1em plus 0.5em
  minus 0.4em\relax Cambridge university press, 2004.

\bibitem{breese1998empirical}
J.~S. Breese, D.~Heckerman, and C.~Kadie, ``Empirical analysis of predictive
  algorithms for collaborative filtering,'' in \emph{UAI}.\hskip 1em plus 0.5em
  minus 0.4em\relax Morgan Kaufmann Publishers Inc., 1998, pp. 43--52.

\bibitem{candes2011robust}
E.~J. Cand{\`e}s, X.~Li, Y.~Ma, and J.~Wright, ``Robust principal component
  analysis?'' \emph{JACM}, vol.~58, no.~3, p.~11, 2011.

\bibitem{chandrasekaran2010latent}
V.~Chandrasekaran, P.~A. Parrilo, and A.~S. Willsky, ``Latent variable
  graphical model selection via convex optimization,'' in \emph{Communication,
  Control, and Computing (Allerton), 2010 48th Annual Allerton Conference
  on}.\hskip 1em plus 0.5em minus 0.4em\relax IEEE, 2010, pp. 1610--1613.

\bibitem{chandrasekaran2011rank}
V.~Chandrasekaran, S.~Sanghavi, P.~A. Parrilo, and A.~S. Willsky,
  ``Rank-sparsity incoherence for matrix decomposition,'' \emph{SIAM Journal on
  Optimization}, vol.~21, no.~2, pp. 572--596, 2011.

\bibitem{elkan2001foundations}
C.~Elkan, ``The foundations of cost-sensitive learning,'' in \emph{IJCAI},
  vol.~17, no.~1.\hskip 1em plus 0.5em minus 0.4em\relax Lawrence Erlbaum
  Associates Ltd, 2001, pp. 973--978.

\bibitem{han2011data}
J.~Han, J.~Pei, and M.~Kamber, \emph{Data mining: concepts and
  techniques}.\hskip 1em plus 0.5em minus 0.4em\relax Elsevier, 2011.

\bibitem{hsieh2015pu}
C.-J. Hsieh, N.~Natarajan, and I.~S. Dhillon, ``Pu learning for matrix
  completion,'' in \emph{ICML}, 2015, pp. 2445--2453.

\bibitem{hsu2011robust}
D.~Hsu, S.~M. Kakade, and T.~Zhang, ``Robust matrix decomposition with sparse
  corruptions,'' \emph{IEEE Transactions on Information Theory}, vol.~57,
  no.~11, pp. 7221--7234, 2011.

\bibitem{hu2008collaborative}
Y.~Hu, Y.~Koren, and C.~Volinsky, ``Collaborative filtering for implicit
  feedback datasets,'' in \emph{ICDM}.\hskip 1em plus 0.5em minus 0.4em\relax
  Ieee, 2008, pp. 263--272.

\bibitem{koren2009matrix}
Y.~Koren, R.~Bell, and C.~Volinsky, ``Matrix factorization techniques for
  recommender systems,'' \emph{Computer}, vol.~42, no.~8, 2009.

\bibitem{krulwich1997lifestyle}
B.~Krulwich, ``Lifestyle finder: Intelligent user profiling using large-scale
  demographic data,'' \emph{AI magazine}, vol.~18, no.~2, p.~37, 1997.

\bibitem{latala2005some}
R.~Lata{\l}a, ``Some estimates of norms of random matrices,'' \emph{Proceedings
  of the American Mathematical Society}, vol. 133, no.~5, pp. 1273--1282, 2005.

\bibitem{linden2003amazon}
G.~Linden, B.~Smith, and J.~York, ``Amazon. com recommendations: Item-to-item
  collaborative filtering,'' \emph{IEEE Internet computing}, vol.~7, no.~1, pp.
  76--80, 2003.

\bibitem{liu2006influence}
X.-Y. Liu and Z.-H. Zhou, ``The influence of class imbalance on cost-sensitive
  learning: An empirical study,'' in \emph{ICDM}.\hskip 1em plus 0.5em minus
  0.4em\relax IEEE, 2006, pp. 970--974.

\bibitem{liu2015boosting}
Y.~Liu, P.~Zhao, A.~Sun, and C.~Miao, ``A boosting algorithm for item
  recommendation with implicit feedback.'' in \emph{IJCAI}, vol.~15, 2015, pp.
  1792--1798.

\bibitem{lozano2008multi}
A.~C. Lozano and N.~Abe, ``Multi-class cost-sensitive boosting with p-norm loss
  functions,'' in \emph{CIKM}.\hskip 1em plus 0.5em minus 0.4em\relax ACM,
  2008, pp. 506--514.

\bibitem{mazumder2010spectral}
R.~Mazumder, T.~Hastie, and R.~Tibshirani, ``Spectral regularization algorithms
  for learning large incomplete matrices,'' \emph{Journal of machine learning
  research}, vol.~11, no. Aug, pp. 2287--2322, 2010.

\bibitem{melville2002content}
P.~Melville, R.~J. Mooney, and R.~Nagarajan, ``Content-boosted collaborative
  filtering for improved recommendations,'' in \emph{Aaai/iaai}, 2002, pp.
  187--192.

\bibitem{meyer2000matrix}
C.~D. Meyer, \emph{Matrix analysis and applied linear algebra}.\hskip 1em plus
  0.5em minus 0.4em\relax Siam, 2000, vol.~2.

\bibitem{miyahara2000collaborative}
K.~Miyahara and M.~Pazzani, ``Collaborative filtering with the simple bayesian
  classifier,'' \emph{PRICAI 2000 Topics in Artificial Intelligence}, pp.
  679--689, 2000.

\bibitem{natarajan2013learning}
N.~Natarajan, I.~S. Dhillon, P.~K. Ravikumar, and A.~Tewari, ``Learning with
  noisy labels,'' in \emph{NIPS}, 2013, pp. 1196--1204.

\bibitem{pan2008one}
R.~Pan, Y.~Zhou, B.~Cao, N.~N. Liu, R.~Lukose, M.~Scholz, and Q.~Yang,
  ``One-class collaborative filtering,'' in \emph{ICDM}.\hskip 1em plus 0.5em
  minus 0.4em\relax IEEE, 2008, pp. 502--511.

\bibitem{pan2013gbpr}
W.~Pan and L.~Chen, ``Gbpr: group preference based bayesian personalized
  ranking for one-class collaborative filtering,'' in \emph{IJCAI}.\hskip 1em
  plus 0.5em minus 0.4em\relax AAAI Press, 2013, pp. 2691--2697.

\bibitem{popescul2001probabilistic}
A.~Popescul, D.~M. Pennock, and S.~Lawrence, ``Probabilistic models for unified
  collaborative and content-based recommendation in sparse-data environments,''
  in \emph{UAI}.\hskip 1em plus 0.5em minus 0.4em\relax Morgan Kaufmann
  Publishers Inc., 2001, pp. 437--444.

\bibitem{recht2010guaranteed}
B.~Recht, M.~Fazel, and P.~A. Parrilo, ``Guaranteed minimum-rank solutions of
  linear matrix equations via nuclear norm minimization,'' \emph{SIAM review},
  vol.~52, no.~3, pp. 471--501, 2010.

\bibitem{rendle2009bpr}
S.~Rendle, C.~Freudenthaler, Z.~Gantner, and L.~Schmidt-Thieme, ``Bpr: Bayesian
  personalized ranking from implicit feedback,'' in \emph{UAI}.\hskip 1em plus
  0.5em minus 0.4em\relax AUAI Press, 2009, pp. 452--461.

\bibitem{rosenberg2007rademacher}
D.~S. Rosenberg and P.~L. Bartlett, ``The rademacher complexity of
  co-regularized kernel classes.'' in \emph{AISTATS}, vol.~7, 2007, pp.
  396--403.

\bibitem{schmidt2005compound}
L.~Schmidt-Thieme, ``Compound classification models for recommender systems,''
  in \emph{ICDM}.\hskip 1em plus 0.5em minus 0.4em\relax IEEE, 2005, pp. 8--pp.

\bibitem{shalev2011stochastic}
S.~Shalev-Shwartz and A.~Tewari, ``Stochastic methods for l1-regularized loss
  minimization,'' \emph{Journal of Machine Learning Research}, vol.~12, no.
  Jun, pp. 1865--1892, 2011.

\bibitem{sindhwani2010one}
V.~Sindhwani, S.~S. Bucak, J.~Hu, and A.~Mojsilovic, ``One-class matrix
  completion with low-density factorizations,'' in \emph{ICDM}.\hskip 1em plus
  0.5em minus 0.4em\relax IEEE, 2010, pp. 1055--1060.

\bibitem{su2009survey}
X.~Su and T.~M. Khoshgoftaar, ``A survey of collaborative filtering
  techniques,'' \emph{Advances in artificial intelligence}, vol. 2009, p.~4,
  2009.

\bibitem{tan1993cost}
M.~Tan, ``Cost-sensitive learning of classification knowledge and its
  applications in robotics,'' \emph{Machine Learning}, vol.~13, no.~1, pp.
  7--33, 1993.

\bibitem{ungar1998clustering}
L.~H. Ungar and D.~P. Foster, ``Clustering methods for collaborative
  filtering,'' in \emph{AAAI workshop on recommendation systems}, vol.~1, 1998,
  pp. 114--129.

\bibitem{vandenberghe1996semidefinite}
L.~Vandenberghe and S.~Boyd, ``Semidefinite programming,'' \emph{SIAM review},
  vol.~38, no.~1, pp. 49--95, 1996.

\bibitem{wang2012probabilistic}
N.~Wang, T.~Yao, J.~Wang, and D.-Y. Yeung, ``A probabilistic approach to robust
  matrix factorization,'' in \emph{ECCV}.\hskip 1em plus 0.5em minus
  0.4em\relax Springer, 2012, pp. 126--139.

\bibitem{weimer2008improving}
M.~Weimer, A.~Karatzoglou, and A.~Smola, ``Improving maximum margin matrix
  factorization,'' \emph{Machine Learning}, vol.~72, no.~3, pp. 263--276, 2008.

\bibitem{wright2009robust}
J.~Wright, A.~Ganesh, S.~Rao, Y.~Peng, and Y.~Ma, ``Robust principal component
  analysis: Exact recovery of corrupted low-rank matrices via convex
  optimization,'' in \emph{NIPS}, 2009, pp. 2080--2088.

\bibitem{yang2016learning}
P.~Yang, G.~Li, P.~Zhao, X.~Li, and S.~Das~Gollapalli, ``Learning correlative
  and personalized structure for online multi-task classification,'' in
  \emph{Proceedings of the 2016 SIAM International Conference on Data
  Mining}.\hskip 1em plus 0.5em minus 0.4em\relax SIAM, 2016, pp. 666--674.

\bibitem{yang2011inferring}
P.~Yang, X.~Li, M.~Wu, C.-K. Kwoh, and S.-K. Ng, ``Inferring gene-phenotype
  associations via global protein complex network propagation,'' \emph{PloS
  one}, vol.~6, no.~7, p. e21502, 2011.

\bibitem{yang2015min}
P.~Yang and P.~Zhao, ``A min-max optimization framework for online graph
  classification,'' in \emph{Proceedings of the 24th ACM International on
  Conference on Information and Knowledge Management}.\hskip 1em plus 0.5em
  minus 0.4em\relax ACM, 2015, pp. 643--652.

\bibitem{yang2017robust}
P.~Yang, P.~Zhao, and X.~Gao, ``Robust online multi-task learning with
  correlative and personalized structures,'' \emph{arXiv preprint
  arXiv:1706.01824}, 2017.

\bibitem{yang2016efficient}
P.~Yang, P.~Zhao, Z.~Hai, W.~Liu, S.~C. Hoi, and X.-L. Li, ``Efficient
  multi-class selective sampling on graphs,'' 2016.

\bibitem{yang2015aggressive}
P.~Yang, P.~Zhao, V.~W. Zheng, and X.-L. Li, ``An aggressive graph-based
  selective sampling algorithm for classification,'' in \emph{Data Mining
  (ICDM), 2015 IEEE International Conference on}.\hskip 1em plus 0.5em minus
  0.4em\relax IEEE, 2015, pp. 509--518.

\bibitem{zhou2010stable}
Z.~Zhou, X.~Li, J.~Wright, E.~Candes, and Y.~Ma, ``Stable principal component
  pursuit,'' in \emph{Information Theory Proceedings (ISIT), 2010 IEEE
  International Symposium on}.\hskip 1em plus 0.5em minus 0.4em\relax IEEE,
  2010, pp. 1518--1522.

\end{thebibliography}
}

\end{document}